\newcommand{\subfiguretitle}[1]{{\scriptsize{#1}} \\}
\newcommand{\R}{\mathbb{R}}                                      % real numbers
\newcommand{\innerprod}[2]{\left\langle #1,\, #2 \right\rangle}  % scalar product
\newcommand{\ts}{\hspace*{0.1em}}                                % thin space
\providecommand{\abs}[1]{\left\lvert #1 \right\rvert}            % absolute value
\providecommand{\norm}[1]{\left\lVert #1 \right\rVert}           % norm
\newcommand\xqed[1]{\leavevmode\unskip\penalty9999 \hbox{}\nobreak\hfill \quad\hbox{#1}}
\newcommand{\exampleSymbol}{\xqed{$\triangle$}}
\DeclareMathOperator{\mspan}{span}
\newtheorem{theorem}{Theorem}[section]
\newtheorem{lemma}[theorem]{Lemma}
\newtheorem{proposition}[theorem]{Proposition}
\newtheorem{definition}[theorem]{Definition}
\newtheorem{example}[theorem]{Example}
\newtheorem{assumption}[theorem]{Assumption}
\newtheorem{remark}[theorem]{Remark}
\newtheorem{talgorithm}[theorem]{Algorithm}
\begin{document}

\title{Learning graphons from data: Random walks, \\ transfer operators, and spectral clustering}

\author{Stefan Klus and Jason J. Bramburger
\thanks{S. Klus is with the School of Mathematical \& Computer Sciences,  Heriot--Watt University, Edinburgh,UK}
\thanks{J.J. Bramburger is with the Department of Mathematics and Statistics, Concordia University, Montr\'eal, Canada.}
}

% \author{\IEEEauthorblockN{Stefan Klus} \\
% \IEEEauthorblockA{\textit{School of Mathematical \& Computer Sciences } \\
% \textit{Heriot--Watt University}\\
% Edinburgh, UK \\
% S.Klus@hw.ac.uk }\\
% \and
% \IEEEauthorblockN{Jason J. Bramburger} \\
% \IEEEauthorblockA{\textit{Deptartment of Mathematics and Statistics} \\
% \textit{Concordia University}\\
% Montreal, Canada \\
% jason.bramburger@concordia.ca}

        % <-this % stops a space
%\thanks{}% <-this % stops a space

% The paper headers
%\markboth{Journal of \LaTeX\ Class Files,~Vol.~14, No.~8, August~2021}%
%{Shell \MakeLowercase{\textit{et al.}}: A Sample Article Using IEEEtran.cls for IEEE Journals}

%\IEEEpubid{0000--0000/00\$00.00~\copyright~2021 IEEE}
% Remember, if you use this you must call \IEEEpubidadjcol in the second
% column for its text to clear the IEEEpubid mark.

\maketitle

\begin{abstract}
Many signals evolve in time as a stochastic process, randomly switching between states over discretely sampled time points. Here we make an explicit link between the underlying stochastic process of a signal that can take on a bounded continuum of values and a random walk process on a graphon. Graphons are infinite-dimensional objects that represent the limit of convergent sequences of graphs whose size tends to infinity. We introduce transfer operators, such as Koopman and Perron--Frobenius operators, associated with random walk processes on graphons and then illustrate how these operators can be estimated from signal data and how their eigenvalues and eigenfunctions can be used for detecting clusters, thereby extending conventional spectral clustering methods from graphs to graphons. Furthermore, we show that it is also possible to reconstruct transition probability densities and, if the random walk process is reversible, the graphon itself using only the signal. The resulting data-driven methods are applied to a variety of synthetic and real-world signals, including daily average temperatures and stock index values.
\end{abstract}

\begin{IEEEkeywords}
    Graphon, random walk, stochastic process, transfer operators, extended dynamic mode decomposition, spectral clustering
\end{IEEEkeywords}

\section{Introduction}

Many signals in the real world that evolve in time can be modeled as a stochastic process with the signal randomly jumping from one state to another as time proceeds. When the signal can only exhibit a finite number of possible states, one can interpret the evolution of the signal as a random walk on a graph with vertices representing the states of the signal and edge weights giving way to the transition probabilities from one state to another. In particular, one arrives at a Markov chain representation of the signal that can be estimated using only the signal data. However, many realistic signals can take on a continuum of values, and so the goal of this work is to present a framework for modeling continuous-space stochastic signals and to identify metastable and coherent sets via clustering techniques.

We present a data-driven method to learn the discrete-time transition probabilities of stochastic signals evolving in continuous space, which can be regarded as a generalization of the discrete space case considered in \cite{KT24, KD24}. The underlying theory is developed by evoking the concept of a \emph{graphon}, which can be defined as the limit of sequences of dense networks that grow without bound~\cite{LS06, lovasz2012large, janson2013graphons}. As recently shown in \cite{PLC21}, graphons provide a well-developed framework for extending the concepts of random walks on finite graphs to stochastic processes evolving in continuous space. For example, random walks on graphs can be used to measure the centrality of vertices, and these concepts can also be extended to graphons \cite{APSS20}. Our goal is to identify transition probabilities, clusters, and the graphon itself from random walk data. Graphons are now finding extensive application in applied mathematics and engineering to perform signal processing on large networks exhibiting similar structures \cite{morency2021graphon, RCR21, levie2023graphon} as well as providing theoretical guarantees on the stability and transferability of graph neural networks \cite{ruiz2020graphon, keriven2020convergence, neuman2023transferability}.

Graphons originate in the theory of dense graph limits and exchangeable random graph models, where they provide a canonical representation of large-network asymptotics; see, e.g., \cite{lovasz2012large} for a comprehensive treatment and \cite{borgs2008convergent} for foundational results on convergent dense graph sequences. Beyond pure graph theory, graphons have also played an important role in systems and control, where they serve as limit objects for large-scale networked dynamical systems and mean-field control problems. Representative contributions include graphon game formulations for network interactions \cite{parise2023graphon} and control of large-scale linear networks via graphon limits \cite{gao2019graphon}. Recent work has also explored machine learning approaches for representing and learning graphons from data, including neural implicit representations of graphons \cite{XMW23}.

In contrast to these directions, the present work adopts an operator-theoretic perspective focused on stochastic processes defined directly on graphons and the data-driven approximation of their associated transfer operators. We will show that graphons allow us to define transfer operators associated to the stochastic process that governs the signal, thereby moving to a linear and deterministic, but infinite-dimensional representation of the underlying system. These transfer operators include the Perron--Frobenius operator that governs the evolution of probability densities and the Koopman operator that propagates scalar functions on the state-space (in expectation) \cite{LaMa94, DJ99, Mezic05}. Discrete counterparts of transfer operators associated with random walks on graphs were defined in \cite{KT24, KD24} to highlight relationships with graph Laplacians and to derive novel spectral clustering algorithms. Since graphons can be regarded as graphs with an uncountable number of nodes, a major contribution of this paper is to extend transfer operators to the graphon setting. The resulting operators share many similarities with transfer operators for continuous dynamical systems governed by stochastic differential equations. This allows us to define spectral clustering for symmetric (i.e., undirected) graphons in terms of \emph{metastability} \cite{Davies82a, SS13}. Metastability implies that the state space can be partitioned into disjoint sets (forming the clusters) in such a way that transitions between these sets are rare events. That is, a random walker will on average spend a long time within a cluster before it moves to a different cluster. Furthermore, using the notion of \emph{coherence} \cite{FrSaMo10, Froyland13}, a generalization of metastability to non-reversible processes, we can also detect clusters in asymmetric (i.e., directed) graphons. The main contributions of this paper are summarized as follows:
\begin{enumerate}[label=\roman*)]
\item We define transfer operators for graphons and derive spectral clustering methods.
\item Furthermore, we show how graphons can be estimated from random walk data.
\item We illustrate the results on both benchmark problems and real-world datasets.
\end{enumerate}

We emphasize that the stochastic processes considered in this work are discrete-time Markov chains defined directly on a continuum node set. That is, the graphon provides a model on $ [0, 1] $ equipped with the Lebesgue measure, and the associated random walk evolves according to a transition density on this continuous state space. Our framework therefore does not proceed by taking limits of finite graphs; rather, the infinite (continuum) node set is the primary object of study from the outset. Finite data and finite-dimensional approximations enter only through trajectory observations and Galerkin projections of the associated transfer operators.
Our approach is thus different from other estimators that learn a graphon from (sequences of) finite graphs, see, e.g., \cite{CA14, XMW23}.

We will introduce graphons, random walks, and the required notation in Section~\ref{sec:Graphons and random walks}. Transfer operators associated with symmetric graphons will be defined and analyzed in Section~\ref{sec:Transfer operators of symmetric graphons}. In particular, we employ \emph{extended dynamic mode decomposition} (EDMD) \cite{WKR15, KKS16} to estimate transfer operators and their spectral decompositions (and in turn the graphon) from data. Following similar work in~\cite{KT24}, we show that the eigenfunctions associated with the largest eigenvalues of the Koopman operator can be used for spectral clustering of undirected graphons. In Section~\ref{sec:Extension to asymmetric graphons}, we further extend this work to directed graphons and show how in this case singular functions of associated transfer operators can be used to detect clusters. Open problems and future work will be discussed in Section~\ref{sec:Discussion}.

%%%%%%%%%%%%%%%%%%%%%%%%%%%%%%%%%%%%%%%%%%%%%%%%%%%%%%%%%%%%%%%%%%%%%%%%%%%%%%%%%%%%%%%%%%
\section{Graphons and random walks}
\label{sec:Graphons and random walks}

To understand random walks in continuous space, we adopt the language and notation of graphons. Graphons arise naturally as the limit of growing sequences of graphs and as a rule for generating finite graphs on an arbitrary number of vertices. Much of this theory can be found in the book~\cite{Lov93}, while here we only present what is relevant to our results.

%%%%%%%%%%%%%%%%%%%%%%%%%%%%%%%%%%%%%%%%%%%%%%%%%%%%%%%%%%%%%%%%%%%%%%%%%%%%%%%%%%%%%%%%%%
\subsection{Graphons}

To begin, a \textbf{graphon} is a Lebesgue-measurable function $ w \colon [0,1] \times [0, 1] \to [0, 1] $. The function $ w $ can be understood to represent the weight of an edge between the continuum of vertices in the graph represented by the values in $ [0, 1] $. In particular, an edge is present between vertices $ x, y \in [0,1] $ if and only if $ w(x, y) > 0 $. A graphon is said to be \textbf{symmetric} or \textbf{undirected} if $ w(x, y) = w(y, x) $ for every pair $ x, y \in [0,1] $. Otherwise, the graphon is said to be \textbf{asymmetric} or \textbf{directed}.

Boundedness of the graphon implies that it belongs to $L^p([0,1]^2)$ for every $p \in [1,\infty]$. Moreover, a graphon can be used to define a kernel of a Hilbert--Schmidt integral operator of the form $\mathcal{W} f(x) = \int_0^1 w(x, y) \ts f(y) \ts \mathrm{d}y$.
This operator induced by the graphon is an infinite-dimensional version of considering the weighted adjacency matrix of a graph as an operator on a finite-dimensional Euclidean space. The operator-theoretic interpretation of the graphon allows us to take the spectrum of the graphon as the spectrum of $ \mathcal{W} $, analogous with the spectrum of a graph being the eigenvalues and eigenvectors of its weighted adjacency matrix.

\begin{definition}[Connectedness]
A graphon $ w $ is called \textbf{connected} if
\begin{equation*}
    \int_{A} \int_{A^c} w(x, y) \ts \mathrm{d}x \ts \mathrm{d}y > 0
\end{equation*}
for all sets $ A \subseteq [0,1] $ with Lebesgue measure $ 0 < \operatorname{vol}(A) < 1 $, where $ A^c = [0,1] \setminus A$ denotes the complement of $ A $ in $ [0, 1] $.
\end{definition}

Connectedness guarantees that edges between any subset of vertices $ A $ in $ [0, 1] $ and its complement $ A^c $ exist. The notion of \emph{strong connectivity} has been extended to graphons as well, see, e.g., \cite{BPS22}.

\begin{remark}
The presentation of graphons here has restricted the vertices to belong to the interval $ [0, 1] $ equipped with the Lebesgue measure. This is the standard convention for graphons, but we note that everything can be generalized to other probability spaces~\cite{LS06, janson2013graphons}. That is, given a probability space $ (\mathcal{X}, \Sigma, \mu) $ we may consider a graphon $ w \colon \mathcal{X} \times \mathcal{X} \to [0, 1] $ to be any $ (\Sigma \times \Sigma) $-measurable function, thus having vertices belonging to the space $ \mathcal{X} $. Throughout the theoretical work that follows we will continue with $\mathcal{X} = [0,1]$ and $\mu$ the Lebesgue measure to maintain a consistent and tidy presentation, while simply remarking that such a generalization is always possible without changing the results that follow.
\end{remark}

%%%%%%%%%%%%%%%%%%%%%%%%%%%%%%%%%%%%%%%%%%%%%%%%%%%%%%%%%%%%%%%%%%%%%%%%%%%%%%%%%%%%%%%%%%
\subsection{Random walks}

We now extend the concept of a random walk on a finite graph to a random walk in the continuous space $ [0, 1] $ using graphons. First, we require the definition of in- and out-degrees.

\begin{definition}[In- and out-degree functions] \label{def:degree}
We define the \textbf{in-degree function} $ d_\text{in} \colon [0,1] \to [0, 1] $ and \textbf{out-degree function} $ d_\text{out} \colon [0,1] \to [0, 1] $ by
\begin{equation*}
    d_\text{in}(x) = \int_0^1 w(y, x) \ts \mathrm{d}y \quad \mathrm{and} \quad
    d_\text{out}(x) = \int_0^1 w(x, y) \ts \mathrm{d}y.
\end{equation*}
For symmetric graphons, we define $ d(x) = d_\text{in}(x) = d_\text{out}(x) $.
\end{definition}

Connectedness of a graphon only implies $ d_\text{out}(x) > 0 $ for almost all $ x \in [0,1] $. However, we will make the assumption that $ d_\text{out}(x) $ is nonzero for all $x \in [0,1]$.

\begin{assumption} \label{assumpt:degree}
For any graphon $ w $ considered herein, there exists $ d_0 > 0 $ such that $ d_\text{out}(x) \geq d_0 $ for all $ x \in [0, 1] $.
\end{assumption}

Assumption~\ref{assumpt:degree} ensures that the invariant density of a random walk on a graphon is nonzero everywhere. This in turn eases much of the analysis that follows.

\begin{definition}[Transition density function] \label{def:transition density function}
Using the out-degree function, we construct the \emph{transition density function} $ p \colon [0,1] \times [0,1] \to [0,\infty) $ by $p(x, y) = {w(x, y)}/{d_\text{out}(x)}$
\end{definition}

That is, $ p(x, \,\cdot\,) $ is the density function describing the probability of going from $ x $ to any other point $ y \in [0,1] $. From the nonnegativity of the graphon $ w $ we immediately have that $ p(x, y) \ge 0 $, while the definition of $ d_\text{out} $ gives that $ \int_0^1 p(x, y) \ts \mathrm{d}y =  1 $. We can now use the transition density function to define a discrete-time random walk process on $ [0, 1] $ as follows.

\begin{definition}[Random walk]
Given the position $ x^{(k)} \in [0,1] $ of the random walker, we sample the new location $ x^{(k+1)} \sim p(x^{(k)}, \,\cdot\,) $.
\end{definition}

\begin{figure*}[t]
    \centering
    \begin{minipage}[t]{0.28\linewidth}
        \centering
        \subfiguretitle{(a)}
        \includegraphics[height=3.5cm]{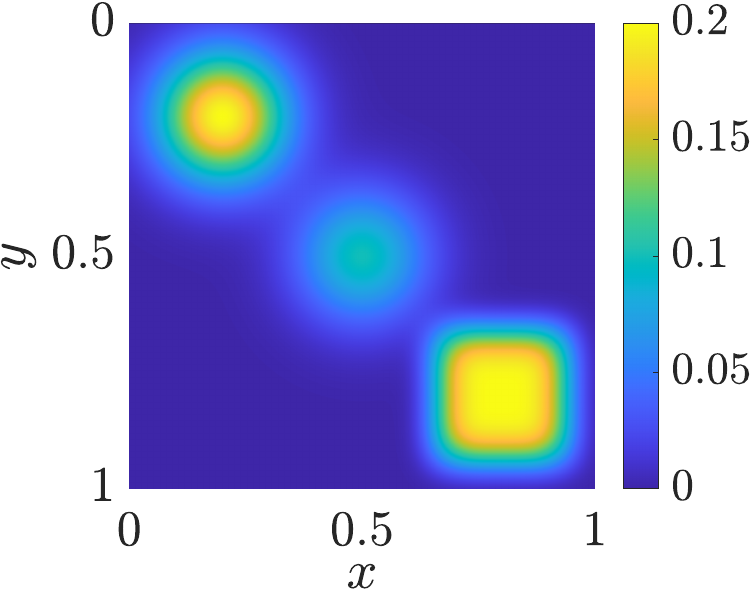}
    \end{minipage}
    \begin{minipage}[t]{0.28\linewidth}
        \centering
        \subfiguretitle{(b)}
        \includegraphics[height=3.5cm]{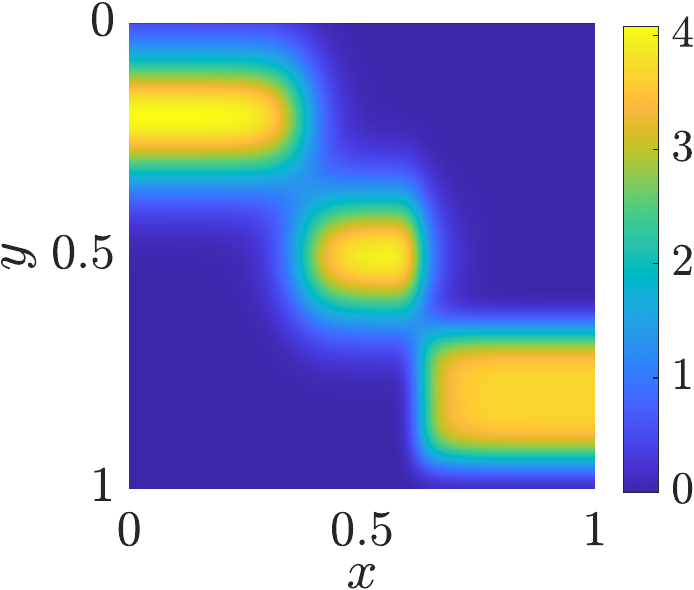}
    \end{minipage}
    \begin{minipage}[t]{0.30\linewidth}
        \centering
        \subfiguretitle{(c)}
        \vspace*{0.1ex}
        \includegraphics[height=3.59cm]{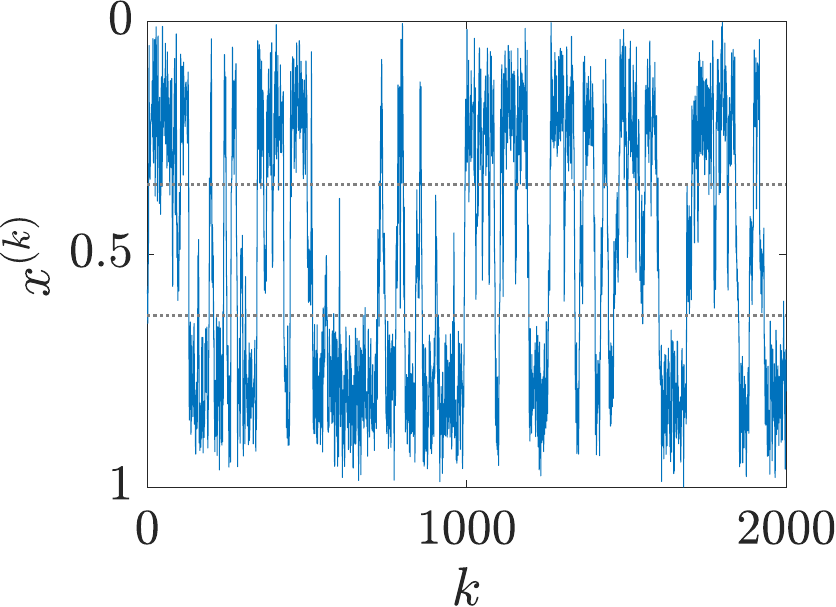}
    \end{minipage} \\[1ex]
    \begin{minipage}[t]{0.28\linewidth}
        \centering
        \subfiguretitle{(d)}
        \includegraphics[height=3.5cm]{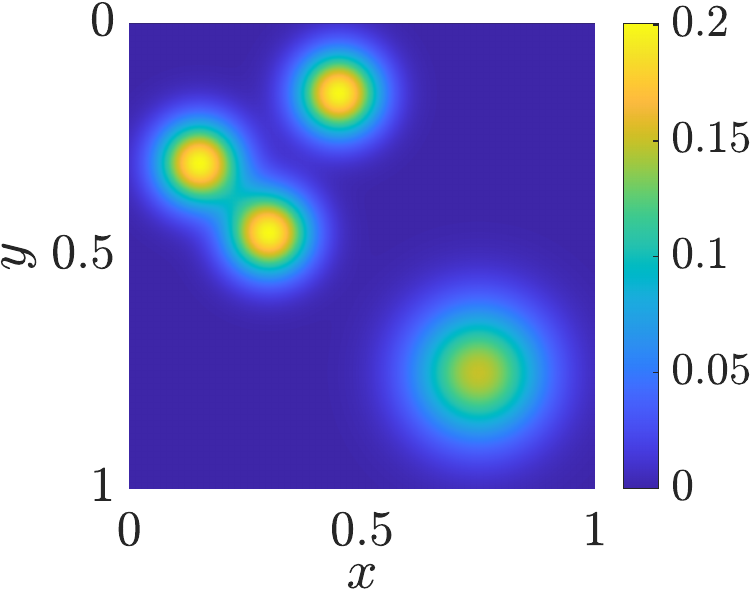}
    \end{minipage}
    \begin{minipage}[t]{0.28\linewidth}
        \centering
        \subfiguretitle{(e)}
        \includegraphics[height=3.5cm]{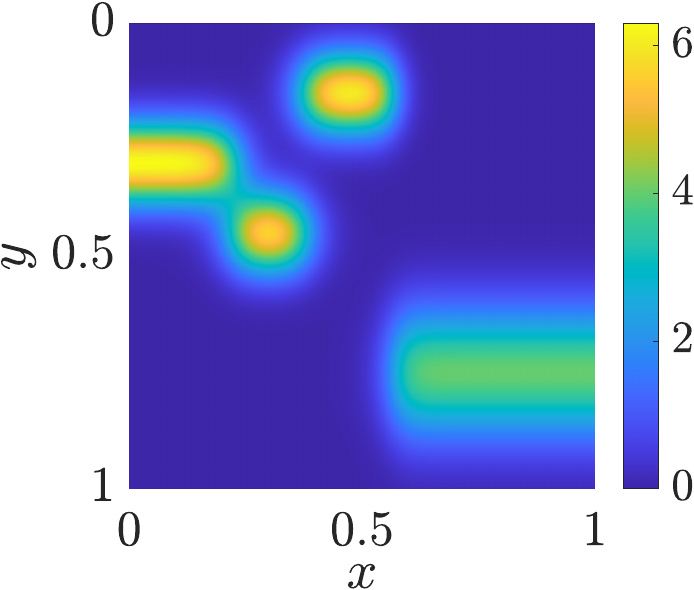}
    \end{minipage}
    \begin{minipage}[t]{0.30\linewidth}
        \centering
        \subfiguretitle{(f)}
        \vspace*{0.1ex}
        \includegraphics[height=3.59cm]{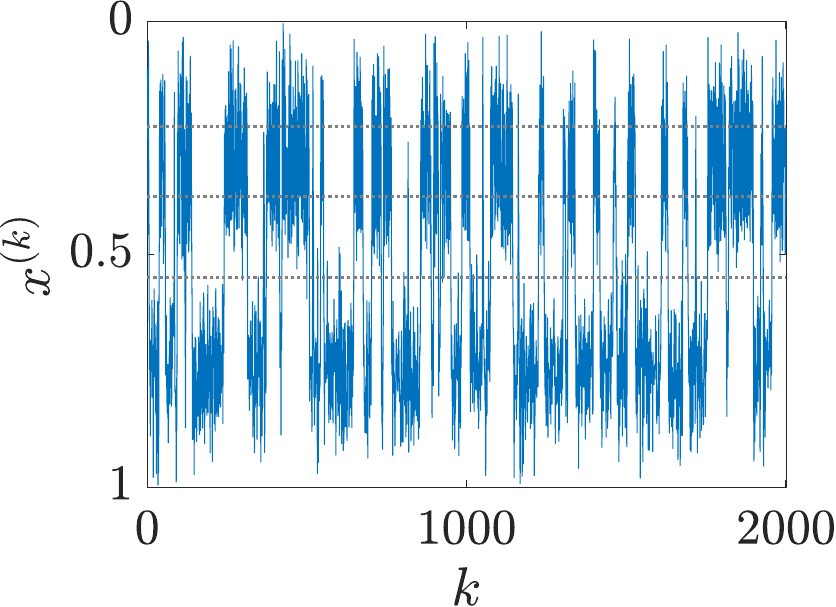}
    \end{minipage}
    \caption{(a)~The symmetric graphon with three peaks at $ x_1 = y_1 = 0.2 $, $ x_2 = y_2 = 0.5 $, and $ x_3 = y_3 = 0.8 $. The middle peak is lower than the other two. (b)~The corresponding transition probabilities $ p $. (c)~One long random walk comprising $ 2000 $ steps. The dotted gray lines mark the boundaries between the peaks. (d)~The asymmetric graphon with four peaks at $ x_1 = 0.15 $, $ y_1 = 0.3 $, $ x_2 = 0.3 $, $ y_2 = 0.45 $, $ x_3 = 0.45 $, $ y_3 = 0.15 $, and $ x_4 = 0.75 $, $ y_4 = 0.75 $. (e)~The corresponding transition probabilities $ p $. (f)~One long random walk comprising $ 2000 $ steps. A random walker typically quickly moves from the first to the second peak, then to the third, and back to the first due to the cyclic structure, while it might be trapped within the fourth cluster for a comparatively long time.}
    \label{fig:guiding examples}
\end{figure*}

This induces a non-deterministic discrete-time dynamical system of the form $x^{(k+1)} = \Theta\big(x^{(k)}\big)$ and can be viewed as a \textbf{Markov chain} defined on the continuous state-space $ [0, 1] $. In order to sample from the distribution $ p\big(x^{(k)}, \,\cdot\,\big) $, we can, for instance, use rejection sampling or inverse transform sampling, see, e.g., \cite{Bishop06} for an overview of sampling methods.

\begin{example} \label{ex:guiding examples}
In Figure~\ref{fig:guiding examples} we present random walks generated by the symmetric graphon
\begin{equation*}
    \begin{split}
         w(x, y) &=  0.2 \ts e^{-\frac{(x - 0.2)^2 + (y - 0.2)^2}{0.02}}
             + 0.1 \ts e^{-\frac{(x - 0.5)^2 + (y - 0.5)^2}{0.02}}\\ &+ 0.2 \ts e^{-\frac{(x - 0.8)^4 + (y - 0.8)^4}{0.0005}},
    \end{split}
\end{equation*}
and the asymmetric graphon
\begin{equation*}
\begin{split}
    w(x, &y) = 0.2 \ts e^{-\frac{(x - 0.15)^2 + (y - 0.3)^2}{0.008}}
         + 0.2 \ts e^{-\frac{(x - 0.3)^2 + (y - 0.45)^2}{0.008}} \\
         &~+ 0.2 \ts e^{-\frac{(x - 0.45)^2 + (y - 0.15)^2}{0.008}}
         + 0.15 \ts e^{-\frac{(x - 0.75)^2 + (y - 0.75)^2}{0.02}}.
 \end{split}
\end{equation*}
In the symmetric case the random walk process exhibits metastable behavior, in that if a random walker starts close to one of the three peaks, the probability to stay in the vicinity of the peak is large. In the asymmetric case only the rightmost peak is metastable, while the three non-diagonal peaks, roughly speaking, form a cycle of length three. \exampleSymbol
\end{example}

%%%%%%%%%%%%%%%%%%%%%%%%%%%%%%%%%%%%%%%%%%%%%%%%%%%%%%%%%%%%%%%%%%%%%%%%%%%%%%%%%%%%%%%%%%
\section{Transfer operators of symmetric graphons}
\label{sec:Transfer operators of symmetric graphons}

In this section, we focus exclusively on symmetric graphons. We first define transfer operators, analyze their eigenvalues and eigenfunctions, and show how they can not only be used for spectral clustering, but also for reconstructing transition densities and the graphon itself. The reason for considering symmetric graphons separately is that the associated random walk process is reversible, which implies that the resulting transfer operators are self-adjoint and thus have a real-valued spectrum. Much of this work will be extended to asymmetric graphons in Section~\ref{sec:Extension to asymmetric graphons} below.

%%%%%%%%%%%%%%%%%%%%%%%%%%%%%%%%%%%%%%%%%%%%%%%%%%%%%%%%%%%%%%%%%%%%%%%%%%%%%%%%%%%%%%%%%%
\subsection{Transfer operators}

In what follows, let $ L_\mu^p $ be the space of (equivalence classes of) functions such that
\begin{equation*}
    \norm{f}_{L_\mu^p} := \bigg(\int_0^1 \abs{f(x)}^p \mu(x) \ts \mathrm{d} x\bigg)^\frac{1}{p} < \infty,
\end{equation*}
where $ \mu $ is a probability density. The corresponding unweighted spaces will be denoted by $ L^p $. For $ p = 2 $, we obtain a Hilbert space with inner product
\begin{equation*}
    \innerprod{f}{g}_\mu = \int_0^1 f(x) \ts g(x) \ts \mu(x) \ts \mathrm{d}x.
\end{equation*}
The unweighted inner product is simply denoted by $ \innerprod{\cdot}{\cdot} $. Of particular interest is the probability density
\begin{equation} \label{eq:invariant density}
    \pi(x) = \frac{1}{Z} \ts d(x), \quad \text{with } Z = \int_0^1 d(x) \ts \mathrm{d}x.
\end{equation}
We will show that $ \pi $ is the uniquely defined invariant density below. Notice that Assumption~\ref{assumpt:degree} gives that there exists a $ d_0 > 0 $ such that $ \pi(x) $ is both bounded away from zero and bounded from uniformly over $ x \in [0, 1] $.

\begin{definition}[Perron--Frobenius and Koopman operators]
Given a symmetric graphon $ w $ with transition density function~$ p $.
\begin{enumerate}[label=\roman*)]
\item The \emph{Perron--Frobenius operator} $ \mathcal{P} \colon L_{\frac{1}{\pi}}^2 \to L_{\frac{1}{\pi}}^2 $ is defined by
\begin{equation*}
    \mathcal{P} \rho(x) = \int_0^1 p(y, x) \ts \rho(y) \ts \mathrm{d}y.
\end{equation*}
\item The Koopman operator $ \mathcal{K} \colon L_{\pi}^2 \to L_{\pi}^2 $ is defined by
\begin{equation*}
    \mathcal{K} f(x) = \int_0^1 p(x, y) \ts f(y) \ts \mathrm{d}y = \mathbb{E}[f(\Theta(x))].
\end{equation*}
\end{enumerate}
\end{definition}

The Perron--Frobenius operator describes the evolution of probability densities. Often one finds that $ \mathcal{P} $ and $ \mathcal{K} $ are considered as operators on $ L^1 $ and $ L^\infty $, respectively, but these operators are well-defined on the appropriately reweighted Hilbert spaces defined above \cite{KWNS18}.

\begin{remark}
Similar operators were also derived in \cite{PLC21} by considering the continuum limit of discrete- and continuous-time node-centric random walks on graphs. While they focus mostly on ensembles of random walkers defined by a probability density (and its evolution), we also consider individual random walks. This allows us to estimate transfer operators and their eigenvalues and eigenfunctions, which can then for example be used for spectral clustering, from trajectory data.
\end{remark}

\begin{lemma} \label{lem:invariant density}
Let $ w $ be a connected symmetric graphon, then the function $ \pi $, defined in \eqref{eq:invariant density}, is an invariant density, i.e., $ \mathcal{P} \pi = \pi $.
\end{lemma}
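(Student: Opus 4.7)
The plan is to compute $\mathcal{P}\pi(x)$ directly from the definitions and watch the factors cancel. Substituting $p(y,x) = w(y,x)/d_\text{out}(y)$ and $\pi(y) = d(y)/Z$ into $\mathcal{P}\pi(x) = \int_0^1 p(y,x)\ts\pi(y)\ts\mathrm{d}y$, the symmetry of $w$ gives $d_\text{out}(y) = d(y)$, so the denominator cancels against the numerator of $\pi(y)$ and I am left with $\frac{1}{Z}\int_0^1 w(y,x)\ts\mathrm{d}y$. This last integral is $d_\text{in}(x)$, which by symmetry again equals $d(x)$, so $\mathcal{P}\pi(x) = d(x)/Z = \pi(x)$.

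Before the calculation, two preliminary points are worth verifying. First, $\pi$ is a bona fide probability density: nonnegativity of $w$ makes $d$, and hence $\pi$, nonnegative, and Assumption~\ref{assumpt:degree} gives $Z = \int_0^1 d(x)\ts\mathrm{d}x \geq d_0 > 0$, while the normalization forces $\int_0^1 \pi(x)\ts\mathrm{d}x = 1$. Second, $\pi$ lies in the stated domain $L^2_{\nicefrac{1}{\pi}}$ of $\mathcal{P}$: indeed $\norm{\pi}_{L^2_{\nicefrac{1}{\pi}}}^2 = \int_0^1 \pi(x)^2/\pi(x)\ts\mathrm{d}x = \int_0^1 \pi(x)\ts\mathrm{d}x = 1 < \infty$, so that $\mathcal{P}\pi$ is a well-defined element of the target space.

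There is essentially no obstacle here — the lemma amounts to an unpacking of definitions, with the only substantive ingredient being the symmetry $w(x,y) = w(y,x)$ applied twice (once to identify $d_\text{out}$ with $d$ in the transition density, once to identify $d_\text{in}$ with $d$ after integrating $w$ in the first variable). The connectedness hypothesis is not actually needed for the invariance identity itself; it is presumably reserved for establishing \emph{uniqueness} of the invariant density, which the statement alludes to in the preceding text but does not claim here.
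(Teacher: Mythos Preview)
Your proof is correct and follows essentially the same direct computation as the paper: substitute the definitions of $p$ and $\pi$, cancel $d(y)$, and use symmetry of $w$ to recognize the remaining integral as $d(x)$. Your additional remarks (that $\pi$ is a genuine probability density, that $\pi \in L^2_{\nicefrac{1}{\pi}}$, and that connectedness is not used for invariance itself) are accurate and go slightly beyond what the paper records, but the core argument is identical.
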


\begin{proof}
The proof is equivalent to the graph case found, for example, in~\cite{KD24}. Using the symmetry of $ w $, we have
\begin{equation*}
    \begin{split}
        \mathcal{P} \pi(x) &= \int_0^1 p(y, x) \ts \pi(y) \ts \mathrm{d}y = \frac{1}{Z} \int_0^1 \frac{w(y, x)}{d(y)} \ts d(y) \ts \mathrm{d}y\\ &= \frac{1}{Z} \int_0^1 w(x, y) \ts \mathrm{d}y= \frac{1}{Z}d(x) = \pi(x). \qedhere
    \end{split}
\end{equation*}
\end{proof}

It was shown in \cite{AR16} that for connected and symmetric graphons the invariant density is uniquely defined and, furthermore, that the random walk process is ergodic. This means that it almost surely holds that
\begin{equation*}
    \lim_{m \to \infty} \frac{1}{m} \sum_{k=1}^m f\big(x^{(k)}\big) = \int_0^1 f(x) \ts \pi(x) \ts \mathrm{d}x.
\end{equation*}

\begin{definition}[Reweighted Perron--Frobenius operator] \label{def:reweighted Perron-Frobenius operator}
Given the uniquely defined invariant density $ \pi $, we can further define the \emph{Perron--Frobenius operator with respect to the invariant density} $ \mathcal{T} \colon L_{\pi}^2 \to L_{\pi}^2 $, acting on functions $ u = \frac{\rho}{\pi} $, where $ \rho $ is a probability density, as
\begin{equation*}
    \mathcal{T} u(x) = \frac{1}{\pi(x)} \int_0^1 p(y, x) \ts \pi(y) \ts u(y) \ts \mathrm{d}y.
\end{equation*}
\end{definition}

That is, the operator $ \mathcal{T} $ propagates probability densities with respect to $ \pi $.

\begin{proposition} \label{pro:properties of transfer operators}
The transfer operators $ \mathcal{P} $, $ \mathcal{K} $, and $ \mathcal{T} $ have the following properties:
\begin{enumerate}[label=\roman*)]
\item We have $ \innerprod{\mathcal{P} \rho}{f} = \innerprod{\rho}{\mathcal{K} f} $, i.e., $ \mathcal{P} $ is the adjoint of $ \mathcal{K} $ with respect to $ \innerprod{\cdot}{\cdot} $.
\item Similarly, $ \innerprod{\mathcal{T} u}{f}_\pi = \innerprod{u}{\mathcal{K} f}_\pi $, i.e., $ \mathcal{T} $ is the adjoint of $ \mathcal{K} $ with respect to $ \innerprod{\cdot}{\cdot}_\pi $.
\item Let $ \rho $ be a probability density, then $ \mathcal{P} \rho(x) $ is also a probability density.
\item The Perron--Frobenius operator is a bounded Markov operator (on densities) with $ \norm{\mathcal{P}}_{L^1} = 1 $.
\item It holds that $ \mathcal{T} \mathds{1} = \mathds{1} $ and $ \mathcal{K} \mathds{1} = \mathds{1} $.
\item If $ p $ is continuous, then $ \ts\mathcal{P}\! $ and $ \ts\mathcal{K}\! $ are compact.
\item The spectra of $ \ts\mathcal{P}\! $ and $ \ts\mathcal{K}\! $ are contained in the unit disk.
\end{enumerate}
\end{proposition}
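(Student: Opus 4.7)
The plan is to dispatch the seven items in sequence, leveraging the fact that (i)--(v) are essentially bookkeeping with Fubini and the normalization $\int_0^1 p(x,y)\,\mathrm{d}y = 1$, while (vi)--(vii) use the Hilbert--Schmidt structure of the integral kernels together with the two-sided bounds on $\pi$ that follow from Assumption~\ref{assumpt:degree}.

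For (i) and (ii), I would just expand both inner products into double integrals and apply Fubini. In (ii) the $\pi$-weights on each side collude nicely with the explicit $\pi(y)$ in the definition of $\mathcal{T}$ so that the $\pi(x)/\pi(x)$ cancels on one side and nothing is lost on the other. Items (iii)--(v) follow from the fact that $p(x,\cdot)$ is a probability density for every fixed $x$: nonnegativity of $\mathcal{P}\rho$ is immediate, and $\int_0^1 \mathcal{P}\rho(x)\,\mathrm{d}x = \int_0^1 \rho(y)\int_0^1 p(y,x)\,\mathrm{d}x\,\mathrm{d}y = \int_0^1 \rho(y)\,\mathrm{d}y$ by Fubini. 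Item (iv) then follows by splitting $f \in L^1$ into positive and negative parts and applying (iii) to each. For (v), $\mathcal{K}\mathds{1} = \mathds{1}$ is immediate, while $\mathcal{T}\mathds{1} = \mathds{1}$ follows from the similarity $\mathcal{T} = \mathcal{D}_\pi^{-1}\mathcal{P}\mathcal{D}_\pi$ combined with Lemma~\ref{lem:invariant density}, since $\mathcal{D}_\pi \mathds{1} = \pi$ and $\mathcal{P}\pi = \pi$.

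For (vi) I would first observe that Assumption~\ref{assumpt:degree} forces $\pi$ to be bounded and bounded away from $0$, so $L_\pi^2$, $L_{\nicefrac{1}{\pi}}^2$ and $L^2$ all have equivalent norms and the same compact operators. Continuity of $p$ on the compact square $[0,1]^2$ then makes $p$ square-integrable, so that $\mathcal{K}$ and $\mathcal{P}$ viewed as integral operators on the unweighted $L^2$ are Hilbert--Schmidt and hence compact; norm equivalence transfers compactness back to the correctly weighted Hilbert spaces on which the operators are originally defined.

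The main obstacle will be (vii), where I want uniform bounds on the spectral radii on the weighted spaces. The plan is to first show $\|\mathcal{K}\|_{L_\pi^2 \to L_\pi^2} \le 1$ via Jensen's inequality, since $p(x,\,\cdot\,)$ is a probability density: $|\mathcal{K}f(x)|^2 \le \int_0^1 p(x,y)\,|f(y)|^2\,\mathrm{d}y$. Integrating against $\pi(x)$ and using the invariance identity $\int_0^1 p(x,y)\,\pi(x)\,\mathrm{d}x = \pi(y)$ from Lemma~\ref{lem:invariant density} yields $\|\mathcal{K}f\|_{L_\pi^2}^2 \le \|f\|_{L_\pi^2}^2$, so $\sigma(\mathcal{K})$ lies in the closed unit disk. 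To transfer this to $\mathcal{P}$, I would use that $\mathcal{D}_\pi \colon L_\pi^2 \to L_{\nicefrac{1}{\pi}}^2$ is an isometric isomorphism (a direct computation, again relying on Assumption~\ref{assumpt:degree}), so the similarity $\mathcal{T} = \mathcal{D}_\pi^{-1}\mathcal{P}\mathcal{D}_\pi$ gives $\sigma(\mathcal{P}) = \sigma(\mathcal{T})$. By (ii), $\mathcal{T}$ is the $L_\pi^2$-adjoint of $\mathcal{K}$, hence $\|\mathcal{T}\|_{L_\pi^2} = \|\mathcal{K}\|_{L_\pi^2} \le 1$ and $\sigma(\mathcal{P})$ also lies in the closed unit disk. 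The delicate point throughout is keeping track of which inner product and which weighted space each bound lives in; the cleanest route is to reduce every operator-norm statement to $L_\pi^2$ via the isometry $\mathcal{D}_\pi$.
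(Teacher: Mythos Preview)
Your treatment of items (i)--(vi) matches the paper's almost exactly: Fubini for the adjoint identities, the normalization $\int_0^1 p(x,y)\,\mathrm{d}y = 1$ for (iii)--(v), and the Hilbert--Schmidt property on $L^2$ transferred to the weighted spaces via the two-sided bounds on $\pi$ for (vi). The paper phrases (iv) as a direct $L^1$ estimate with the triangle inequality rather than a positive/negative part decomposition, but the content is identical.

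The genuine difference is in (vii). The paper reuses (iv): from $\|\mathcal{P}\|_{L^1} = 1$ and $L^2 \subset L^1$ it bounds every eigenvalue of $\mathcal{P}$ on $L^2$ by $1$ in modulus, then passes to $\mathcal{K}$ via duality and to the weighted spaces via the isometric isomorphism with $L^2$. Your route instead bounds the operator norm of $\mathcal{K}$ on $L_\pi^2$ directly, using Jensen against the probability measure $p(x,\cdot)$ and the invariance identity $\int_0^1 p(x,y)\,\pi(x)\,\mathrm{d}x = \pi(y)$ from Lemma~\ref{lem:invariant density}, and then transfers to $\mathcal{P}$ via the isometry $\mathcal{D}_\pi$ and the similarity $\mathcal{T} = \mathcal{D}_\pi^{-1}\mathcal{P}\mathcal{D}_\pi$. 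Your argument is arguably cleaner: it yields $\|\mathcal{K}\|_{L_\pi^2 \to L_\pi^2} \le 1$ outright, which bounds the full spectrum without any appeal to compactness, whereas the paper's eigenvalue argument tacitly needs compactness (so that the nonzero spectrum consists only of eigenvalues) to upgrade from an eigenvalue bound to a spectral bound. The paper's approach, on the other hand, recycles (iv) at no extra cost and does not need to invoke the invariance of $\pi$ a second time.
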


\begin{proof}
The properties of transfer operators for stochastic differential equations are well-understood and the corresponding proofs can be found, for example, in \cite{Froyland13, NoNu13, SS13, KKS16}. The derivations for graphons are similar. We include short proofs of the statements for the sake of completeness:
\begin{enumerate}[label=\roman*)]
\item This follows immediately from the definitions since
\begin{align*}
\innerprod{\mathcal{P} \rho}{f}
    &= \int_0^1 \mathcal{P} \rho(x) \ts f(x) \ts \mathrm{d}x\\
    &= \int_0^1\int_0^1 p(y, x) \ts \rho(y) \ts \mathrm{d}y \ts f(x) \ts \mathrm{d}x \\
    &= \int_0^1 \rho(y) \int_0^1 p(y, x) \ts f(x) \ts \mathrm{d}x \ts \mathrm{d}y\\
    &= \int_0^1 \rho(y) \ts \mathcal{K} f(y) \ts \mathrm{d}y = \innerprod{\rho}{\mathcal{K} f}.
\end{align*}
\item This is almost identical to the proof of property i).
\item First, $ \mathcal{P} \rho(x) \ge 0 $ since $ p(y, x) \ge 0 $ and $ \rho(x) \ge 0 $. Furthermore, we have
\begin{align*}
\int_0^1 \mathcal{P} \rho(x) \ts \mathrm{d}x &= \int_0^1\int_0^1 p(y, x) \ts \rho(y) \ts \mathrm{d}y \ts \mathrm{d}x \\
    &= \int_0^1 \!\!\! \underbrace{\int_0^1 p(y, x) \ts \mathrm{d}x}_{=1} \ts \rho(y) \ts \mathrm{d}y = 1.
\end{align*}
\item Let $ \norm{\rho}_{L^1} = 1 $, then
\begin{align*}
    \norm{\mathcal{P} \rho}_{L^1}
        &= \int_0^1 \bigg| \int_0^1 p(y, x) \rho(y) \ts \mathrm{d}y \bigg| \mathrm{d}x \\
        &\le \int_0^1\int_0^1 p(y, x) \abs{\rho(y)} \ts \mathrm{d}y \ts \mathrm{d}x \\
        &= \int_0^1\int_0^1 p(y, x) \ts \mathrm{d}x \abs{\rho(y)} \ts \mathrm{d}y = \norm{\rho}_{L^1} = 1,
\end{align*}
but choosing $ \rho \equiv 1 $ implies $ \norm{\mathcal{P} \rho}_{L^1} = 1 $ and $ \norm{\mathcal{P}}_{L^1} = 1 $, see also \cite{PLC21}. If $ \rho \ge 0 $, then $ \mathcal{P} \rho \ge 0 $ as shown above and $ \norm{\mathcal{P} \rho}_{L^1} = \norm{\rho}_{L^1} $, i.e., $ \mathcal{P} $ is Markov.
\item We have $ \mathcal{T} \mathds{1}(x) = \frac{1}{\pi(x)} \mathcal{P} \pi (x) = \mathds{1}(x) $. Similarly,
\begin{equation*}
    \mathcal{K} \mathds{1}(x) = \int_0^1 p(x, y) \ts \mathds{1}(y) \ts \mathrm{d}y = \mathds{1}(x)
\end{equation*}
since $ p(x, \cdot) $ is a probability density.
\item The properties of integral operators of this form have been analyzed in detail for unweighted $ L^2 $ spaces. Necessary and sufficient conditions for compactness can be found, e.g., in \cite{Ando62, Graham79}, while the compactness of the Perron--Frobenius operator for graphons on $ L^2 $ was also proven in~\cite{PLC21}. Then, since the invariant density $\pi$ is bounded away from zero and above uniformly in $ x \in [0,1] $, we have that $ L^2 $ and $ L^2_\pi $ are isometrically isomorphic via the map $L^2 \ni v \mapsto \pi^{-\frac{1}{2}}v \in L^2_\pi$. Thus, compactness in $ L^2 $ carries over to compactness in $ L^2_\pi $, proving the statement for~$ \mathcal{K} $. Similarly, $ \mathcal{P} $ is compact as an operator on $ L^2_\frac{1}{\pi} $ because $ L^2 $ and $ L^2_\frac{1}{\pi} $ are also isometrically isomorphic due again to the boundedness and uniform nonzero properties of $ \pi $.

\item Let $ \mathcal{P} \varphi = \lambda \ts \varphi $ with $\varphi \in L^2$. Then, since $L^2 \subset L^1$ and using property iv), we have $\norm{\varphi}_{L^1} \ge \norm{\mathcal{P} \varphi}_{L^1} = \abs{\lambda} \norm{\varphi}_{L^1}$, giving that $|\lambda| \leq 1$. Since $ \mathcal{K} = \mathcal{P}^* $, it holds that $ \sigma(\mathcal{K}) = \big\{\overline{\lambda} : \lambda \in \sigma(\mathcal{P}) \big\} $, where $ \overline{\lambda} $ denotes the complex conjugate of $ \lambda $. Then, to move to the weighted spaces $ L^2_\pi $ and $ L^2_\frac{1}{\pi} $ we again appeal to the fact that they are isometrically isomorphic to $ L^2 $, meaning the spectrum remains the same when posing the operators on these weighted spaces. \qedhere
\end{enumerate}
\end{proof}

%%%%%%%%%%%%%%%%%%%%%%%%%%%%%%%%%%%%%%%%%%%%%%%%%%%%%%%%%%%%%%%%%%%%%%%%%%%%%%%%%%%%%%%%%%
\subsection{Reversibility}

We begin by recalling the following definition, adapted for the continuous-space random walks herein.

\begin{definition}[Reversibility]
A process is called \emph{reversible} if there exists a probability density $ \pi $ that satisfies the \emph{detailed balance condition}
\begin{equation*}
    p(x, y) \ts \pi(x) = p(y, x) \ts \pi(y) ~~ \forall x, y \in [0,1].
\end{equation*}
\end{definition}

\begin{lemma}
The random-walk process defined on a symmetric graphon is reversible.
\end{lemma}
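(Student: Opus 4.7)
The plan is to verify the detailed balance condition by direct substitution, using only the definitions of $p$ and $\pi$ together with the symmetry of $w$. First I would recall that for a symmetric graphon, Definition~\ref{def:degree} yields $d(x) = d_\text{in}(x) = d_\text{out}(x)$, so the transition density from Definition~\ref{def:transition density function} simplifies to $p(x,y) = w(x,y)/d(x)$, and the candidate invariant density is $\pi(x) = d(x)/Z$ from \eqref{eq:invariant density}.

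Next I would compute both sides of the detailed balance equation directly. On the left,
\begin{equation*}
p(x,y)\,\pi(x) = \frac{w(x,y)}{d(x)} \cdot \frac{d(x)}{Z} = \frac{w(x,y)}{Z},
\end{equation*}
and similarly on the right,
\begin{equation*}
p(y,x)\,\pi(y) = \frac{w(y,x)}{d(y)} \cdot \frac{d(y)}{Z} = \frac{w(y,x)}{Z}.
\end{equation*}
By symmetry of $w$, these expressions are equal, which establishes the detailed balance condition. Assumption~\ref{assumpt:degree} ensures that the denominators $d(x)$ and $d(y)$ are strictly positive, so the cancellations are justified pointwise on $[0,1]$.

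There is really no obstacle here — the argument is essentially a one-line cancellation enabled by the fact that the normalization $\pi \propto d$ is precisely tuned to cancel the denominator of $p$. The only subtlety worth noting is that one should explicitly use Assumption~\ref{assumpt:degree} (rather than mere connectedness) to conclude that $p$ is well-defined and that the identity holds for every $x,y \in [0,1]$, not merely almost everywhere; this matches the way detailed balance is stated in the definition just above.
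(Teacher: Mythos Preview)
Your proof is correct and follows essentially the same route as the paper: both verify detailed balance by substituting $p(x,y)=w(x,y)/d(x)$ and $\pi(x)=d(x)/Z$, cancelling $d$, and invoking $w(x,y)=w(y,x)$. Your explicit mention of Assumption~\ref{assumpt:degree} to justify the pointwise cancellation is a nice touch that the paper leaves implicit.
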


\begin{proof}
Since $ w(x, y) = w(y, x) $ for all $x,y \in [0,1]$, it holds that
\begin{equation*}
    \begin{split}
        p(x, y) \ts \pi(x) &= \frac{1}{Z} \frac{w(x, y)}{d(x)} d(x) \\ &= \frac{1}{Z} \frac{w(y, x)}{d(y)} d(y) = p(y, x) \ts \pi(y). \qedhere
    \end{split}
\end{equation*}
\end{proof}

This result implies that $ \mathcal{T} = \mathcal{K} $ since
\begin{equation*}
    \begin{split}
        \mathcal{T} u(x)
        &= \frac{1}{\pi(x)} \int_0^1 p(y, x) \ts \pi(y) \ts u(y) \ts \mathrm{d}y \\
        &= \int_0^1 p(x, y) \ts u(y) \ts \mathrm{d}y = \mathcal{K} u(x).
    \end{split}
\end{equation*}
Furthermore, $ \mathcal{P} $ and $ \mathcal{K} $ are then self-adjoint operators with respect to appropriately weighted inner products.

\begin{lemma} \label{lem:self-adjoint}
Given a symmetric graphon $ w $, let $ \pi $ be the invariant density defined in \eqref{eq:invariant density}, then
\begin{align*}
    \innerprod{\mathcal{P} \rho}{\sigma}_{\frac{1}{\pi}} = \innerprod{\rho}{\mathcal{P} \sigma}_{\frac{1}{\pi}}
    \quad \text{and} \quad
    \innerprod{\mathcal{K} f}{g}_\pi = \innerprod{f}{\mathcal{K}g}_\pi.
\end{align*}
\end{lemma}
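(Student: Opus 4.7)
The plan is to reduce both identities to the detailed balance relation $p(x,y)\pi(x) = p(y,x)\pi(y)$ just established in the preceding lemma, using Fubini to swap the order of integration.

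For the Koopman identity, I would write out
\begin{equation*}
\innerprod{\mathcal{K} f}{g}_\pi = \int_0^1 \int_0^1 p(x,y) f(y) g(x) \pi(x) \, \mathrm{d}y \, \mathrm{d}x,
\end{equation*}
substitute $p(x,y)\pi(x) = p(y,x)\pi(y)$ via detailed balance, apply Fubini (justified because the integrand is absolutely integrable, since $p$ is a probability density in its second argument and $f,g \in L^2_\pi$ with $\pi$ bounded), and recognize the inner integral as $\mathcal{K} g(y)$. This yields $\innerprod{f}{\mathcal{K} g}_\pi$. Alternatively, and even more concisely, one can observe that the reversibility argument immediately before the lemma has just shown $\mathcal{T} = \mathcal{K}$, and then invoke Proposition~\ref{pro:properties of transfer operators}(ii), which gives $\innerprod{\mathcal{T} u}{f}_\pi = \innerprod{u}{\mathcal{K} f}_\pi$, directly yielding self-adjointness of $\mathcal{K}$.

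For the Perron--Frobenius identity, the analogous direct computation goes
\begin{equation*}
\innerprod{\mathcal{P}\rho}{\sigma}_{1/\pi} = \int_0^1 \int_0^1 p(y,x) \rho(y)\sigma(x) \frac{1}{\pi(x)} \, \mathrm{d}y \, \mathrm{d}x,
\end{equation*}
and detailed balance in the form $p(y,x)/\pi(x) = p(x,y)/\pi(y)$ lets me swap the roles of $x$ and $y$ to obtain $\innerprod{\rho}{\mathcal{P}\sigma}_{1/\pi}$. A slicker alternative is to use the intertwining $\mathcal{P} = \mathcal{D}_\pi\,\mathcal{T}\,\mathcal{D}_\pi^{-1} = \mathcal{D}_\pi\,\mathcal{K}\,\mathcal{D}_\pi^{-1}$ from Definition~\ref{def:reweighted Perron-Frobenius operator}, together with the isometry $\mathcal{D}_\pi \colon L^2_\pi \to L^2_{1/\pi}$ (which holds since $\innerprod{\mathcal{D}_\pi u}{\mathcal{D}_\pi v}_{1/\pi} = \innerprod{u}{v}_\pi$), so that self-adjointness of $\mathcal{K}$ on $L^2_\pi$ transfers to self-adjointness of $\mathcal{P}$ on $L^2_{1/\pi}$.

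I do not anticipate any real obstacle here; the only subtlety is making sure the Fubini swap is legal and that the weighted inner products are finite, which follows from Assumption~\ref{assumpt:degree} (so $\pi$ and $1/\pi$ are bounded) and the fact that $p(y,\cdot)$ and $p(\cdot,x)$ integrate nicely against $L^2$ functions on a compact interval. I would therefore write the proof as two short displayed computations, one for each identity, and optionally note the conceptual shortcut through $\mathcal{T} = \mathcal{K}$ for readers who prefer the operator-theoretic view.
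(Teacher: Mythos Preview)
Your proposal is correct and matches the paper's approach almost exactly: the paper carries out the direct detailed-balance computation for $\mathcal{P}$ (your displayed identity with $p(y,x)/\pi(x)=p(x,y)/\pi(y)$) and then handles $\mathcal{K}$ via the shortcut you describe, invoking Proposition~\ref{pro:properties of transfer operators}(ii) together with $\mathcal{T}=\mathcal{K}$. Your additional alternatives (the direct computation for $\mathcal{K}$ and the $\mathcal{D}_\pi$-intertwining for $\mathcal{P}$) are valid but not used in the paper.
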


\begin{proof}
It holds that
\begin{align*}
    \innerprod{\mathcal{P} \rho}{\sigma}_{\frac{1}{\pi}}
        &= \int_0^1 \mathcal{P} \rho(x) \ts \sigma(x) \ts \frac{1}{\pi(x)} \ts \mathrm{d}x\\ &= \int_0^1\int_0^1 \frac{p(y, x)}{\pi(x)} \ts \rho(y) \ts \sigma(x) \ts \mathrm{d}x \ts \mathrm{d}y \\
        &= \int_0^1\int_0^1 \frac{p(x, y)}{\pi(y)} \ts \rho(y) \ts \sigma(x) \ts \mathrm{d}x \ts \mathrm{d}y \\
        &= \int_0^1 \rho(y) \mathcal{P} \sigma(y) \ts \frac{1}{\pi(y)} \ts \mathrm{d}y \\
        &= \innerprod{\rho}{\mathcal{P}\sigma}_{\frac{1}{\pi}},
\end{align*}
see also \cite{NoNu13}. The result for the Koopman operator follows immediately from Proposition~\ref{pro:properties of transfer operators}, property ii), and the fact that $ \mathcal{T} = \mathcal{K} $.
\end{proof}

%%%%%%%%%%%%%%%%%%%%%%%%%%%%%%%%%%%%%%%%%%%%%%%%%%%%%%%%%%%%%%%%%%%%%%%%%%%%%%%%%%%%%%%%%%
\subsection{Spectral decompositions}

Since the Perron--Frobenius and the Koopman operators are self-adjoint with respect to the appropriate weighted inner products given in Lemma~\ref{lem:self-adjoint}, this implies that their eigenvalues are real-valued. Moreover, the following lemma details a correspondence between the eigenvalues and eigenfunctions of these operators.

\begin{lemma} \label{lem:eigenfunction properties}
Given a symmetric graphon $w$, let $ \varphi_\ell $ be an eigenfunction of the associated Koopman operator, then $ \widehat{\varphi}_\ell = \pi \ts \varphi_\ell $ is an eigenfunction of the Perron--Frobenius operator, i.e.,
\begin{equation*}
    \mathcal{K} \varphi_\ell = \lambda_\ell \ts \varphi_\ell
    \implies
    \mathcal{P} \widehat{\varphi}_\ell = \lambda_\ell \ts \widehat{\varphi}_\ell.
\end{equation*}
\end{lemma}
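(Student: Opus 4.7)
The plan is to exploit the relations already established for symmetric graphons: reversibility (which gives the detailed balance condition $p(x,y)\ts\pi(x) = p(y,x)\ts\pi(y)$) and the identity $\mathcal{T} = \mathcal{D}_\pi^{-1}\ts\mathcal{P}\ts\mathcal{D}_\pi = \mathcal{K}$ noted after Definition~\ref{def:reweighted Perron-Frobenius operator}. Either path yields the result essentially in one line, so the main task is simply to pick the cleaner presentation.

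The operator-theoretic route is to rearrange $\mathcal{T} = \mathcal{D}_\pi^{-1}\ts\mathcal{P}\ts\mathcal{D}_\pi$ into $\mathcal{P}\ts\mathcal{D}_\pi = \mathcal{D}_\pi\ts\mathcal{T}$, then use $\mathcal{T} = \mathcal{K}$ to obtain the intertwining relation $\mathcal{P}\ts\mathcal{D}_\pi = \mathcal{D}_\pi\ts\mathcal{K}$. Applying both sides to $\varphi_\ell$ gives $\mathcal{P}\widehat{\varphi}_\ell = \mathcal{P}\ts\mathcal{D}_\pi \varphi_\ell = \mathcal{D}_\pi\ts\mathcal{K}\varphi_\ell = \lambda_\ell\ts\mathcal{D}_\pi\varphi_\ell = \lambda_\ell\ts\widehat{\varphi}_\ell$, which is what is claimed.

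Equivalently, and perhaps more transparently for the reader, one may proceed by direct computation. Starting from the definition of $\mathcal{P}$,
\begin{equation*}
    \mathcal{P}\widehat{\varphi}_\ell(x) = \int_0^1 p(y,x)\ts\pi(y)\ts\varphi_\ell(y)\ts\mathrm{d}y,
\end{equation*}
apply detailed balance to swap $p(y,x)\ts\pi(y)$ for $p(x,y)\ts\pi(x)$, factor the $x$-dependent $\pi(x)$ out of the integral, and recognize what remains as $\mathcal{K}\varphi_\ell(x)$. Using the eigenvalue equation $\mathcal{K}\varphi_\ell = \lambda_\ell\ts\varphi_\ell$ then gives $\mathcal{P}\widehat{\varphi}_\ell(x) = \pi(x)\ts\lambda_\ell\ts\varphi_\ell(x) = \lambda_\ell\ts\widehat{\varphi}_\ell(x)$.

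There is no real obstacle here: Assumption~\ref{assumpt:degree} ensures $\pi$ is bounded and bounded away from zero, so multiplication by $\pi$ maps $L^2_\pi$ to $L^2_{1/\pi}$ boundedly and invertibly, and $\widehat{\varphi}_\ell$ therefore lies in the domain of $\mathcal{P}$. The only mild subtlety worth flagging is making sure the reader sees that reversibility (not just symmetry of the inner-product structure) is what makes the intertwining work, which is why I would lead with the direct-computation version that uses detailed balance explicitly.
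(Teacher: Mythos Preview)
Your proposal is correct and your direct-computation version is exactly the paper's proof: start from $\mathcal{P}\widehat{\varphi}_\ell(x) = \int_0^1 p(y,x)\ts\pi(y)\ts\varphi_\ell(y)\ts\mathrm{d}y$, apply detailed balance to get $\int_0^1 p(x,y)\ts\pi(x)\ts\varphi_\ell(y)\ts\mathrm{d}y = \pi(x)\ts\mathcal{K}\varphi_\ell(x) = \lambda_\ell\ts\pi(x)\ts\varphi_\ell(x) = \lambda_\ell\ts\widehat{\varphi}_\ell(x)$. The operator-theoretic intertwining argument you also give is an equivalent repackaging of the same step.
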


\begin{proof}
We have
\begin{align*}
    \mathcal{P} \widehat{\varphi}_\ell(x)
        &= \int_0^1 p(y, x) \ts \pi(y) \ts \varphi_\ell(y) \ts \mathrm{d}y \\
        &= \int_0^1 p(x, y) \ts \pi(x) \ts \varphi_\ell(y) \ts \mathrm{d}y \\
        &= \pi(x) \ts \mathcal{K} \varphi_\ell(x) = \lambda_\ell \ts \pi(x) \ts \varphi_\ell(x) = \lambda_\ell \ts \widehat{\varphi}_\ell(x). \qedhere
\end{align*}
\end{proof}

Thus, using the well-known spectral decomposition of compact linear operators---see Appendix~\ref{app:spectral decompositions} for more details---, we can write
\begin{align*}
    \mathcal{P} \rho &= \sum_\ell \lambda_\ell (\widehat{\varphi}_\ell \otimes_{\frac{1}{\pi}} \widehat{\varphi}_\ell) \rho = \sum_\ell \lambda_\ell \innerprod{\widehat{\varphi}_\ell}{\rho}_{\frac{1}{\pi}} \widehat{\varphi}_\ell \\ &= \sum_\ell \lambda_\ell \innerprod{\varphi_\ell}{\rho} \widehat{\varphi}_\ell= \sum_\ell \lambda_\ell (\widehat{\varphi}_\ell \otimes \varphi_\ell) \rho, \\
    \mathcal{K} f &= \sum_\ell \lambda_\ell (\varphi_\ell \otimes_\pi \varphi_\ell) f = \sum_\ell \lambda_\ell \innerprod{\varphi_\ell}{f}_\pi \varphi_\ell\\ &= \sum_\ell \lambda_\ell \innerprod{\widehat{\varphi}_\ell}{f} \varphi_\ell = \sum_\ell \lambda_\ell (\varphi_\ell \otimes \widehat{\varphi}_\ell) f.
\end{align*}
The eigendecomposition also allows us to write the transition density function in terms of the eigenfunctions, i.e.,
\begin{equation*}
    p(x, y) = \sum_\ell \lambda_\ell \ts \varphi_\ell(x) \ts \widehat{\varphi}_\ell(y).
\end{equation*}
A similar decomposition of the transition probabilities associated with stochastic processes was derived in \cite{NC15}. Combining the definition of the transition density function and \eqref{eq:invariant density}, it follows that $ w(x, y) = d(x) \ts p(x, y)  = Z \ts \pi(x) \ts p(x, y) $ and consequently
\begin{equation*}
    w(x, y) = Z \sum_\ell \lambda_\ell \ts \widehat{\varphi}_\ell(x) \ts \widehat{\varphi}_\ell(y).
\end{equation*}
Using the eigenvalues and eigenfunctions of associated transfer operators, we can thus reconstruct the graphon up to a multiplicative factor. The non-uniqueness is due to the fact that the graphons $ w $ and $ c \ts w $ for a positive constant $ c $ give rise to the same transition densities and consequently also the same transfer operators.

\begin{remark} \label{rem:low-rank approximation}
Assuming there are $ r $ eigenvalues $ \lambda_1, \dots, \lambda_r $ close to 1, followed by a spectral gap so that $ \lambda_k \approx 0 $ for $ k \geq r+1 $, we then have
\begin{equation*}
    p(x, y) \approx \sum_{\ell=1}^r \lambda_\ell \ts \varphi_\ell(x) \ts \widehat{\varphi}_\ell(y), ~~
    w(x, y) \approx Z \sum_{\ell=1}^r \lambda_\ell \ts \widehat{\varphi}_\ell(x) \ts \widehat{\varphi}_\ell(y).
\end{equation*}
The number of dominant eigenvalues is related to the number of metastable sets or clusters. This will be illustrated in more detail below.
\end{remark}

We close this section with the following useful property on the boundedness and continuity of the eigenfunctions of $ \mathcal{K} $. Coupling this result with that of Lemma~\ref{lem:eigenfunction properties}, we obtain boundedness and continuity of the eigenfunctions of $ \mathcal{P} $.

\begin{lemma}
Under Assumption~\ref{assumpt:degree}, every eigenfunction of $ \mathcal{K} $ on $ L_\pi^2 $ having nonzero eigenvalue is bounded. Further, if the graphon $ w $ is such that for all $ a \in [0, 1] $ we have
\begin{equation} \label{eq:degree continuity}
    \lim_{x \to a} \int_0^1 \abs{w(x,y) - w(a,y)} \ts \mathrm{d}y = 0,
\end{equation}
then every eigenfunction of $ \mathcal{K} $ on $ L_\pi^2 $ having nonzero eigenvalue is continuous.
\end{lemma}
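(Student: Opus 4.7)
The strategy is to bootstrap: first use the eigenvalue equation together with Assumption~\ref{assumpt:degree} to establish the boundedness of $\varphi$ by a crude sup-norm estimate, then use the boundedness to upgrade to continuity via the $L^1$-continuity hypothesis \eqref{eq:degree continuity}.

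For boundedness, suppose $\mathcal{K}\varphi = \lambda \varphi$ with $\lambda \neq 0$. Writing out the definitions, I have
\begin{equation*}
    \varphi(x) = \frac{1}{\lambda} \int_0^1 p(x,y) \ts \varphi(y) \ts \mathrm{d}y = \frac{1}{\lambda \ts d(x)} \int_0^1 w(x,y) \ts \varphi(y) \ts \mathrm{d}y.
\end{equation*}
Since $ w(x,y) \leq 1 $ and, by Assumption~\ref{assumpt:degree}, $ d(x) \geq d_0 > 0 $, I obtain the uniform bound
\begin{equation*}
    \abs{\varphi(x)} \leq \frac{1}{\abs{\lambda} \ts d_0} \int_0^1 \abs{\varphi(y)} \ts \mathrm{d}y = \frac{\norm{\varphi}_{L^1}}{\abs{\lambda} \ts d_0}.
\end{equation*}
I then need to check the right-hand side is finite. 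Because $ \pi $ is bounded above and bounded away from zero on $ [0,1] $ by Assumption~\ref{assumpt:degree}, $ L^2_\pi $ is isomorphic to $ L^2 $ (as already used in the proof of Proposition~\ref{pro:properties of transfer operators}), and hence $ \varphi \in L^2 \subset L^1 $ by Cauchy--Schwarz on the unit interval. This completes the first claim.

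For continuity, I fix $ a \in [0,1] $ and estimate
\begin{equation*}
    \abs{\lambda}\abs{\varphi(x) - \varphi(a)} = \abs*{\frac{1}{d(x)} \int_0^1 w(x,y) \ts \varphi(y) \ts \mathrm{d}y - \frac{1}{d(a)} \int_0^1 w(a,y) \ts \varphi(y) \ts \mathrm{d}y}.
\end{equation*}
The hypothesis \eqref{eq:degree continuity} instantly yields $ \abs{d(x) - d(a)} \to 0 $ as $ x \to a $, so $ d $ is continuous and $ 1/d $ is continuous since $ d \geq d_0 > 0 $. For the numerator, I use the boundedness of $ \varphi $ just established, letting $ M = \sup_{x \in [0,1]} \abs{\varphi(x)} $, to get
\begin{equation*}
    \abs*{\int_0^1 (w(x,y) - w(a,y)) \ts \varphi(y) \ts \mathrm{d}y} \leq M \int_0^1 \abs{w(x,y) - w(a,y)} \ts \mathrm{d}y \xrightarrow{x \to a} 0,
\end{equation*}
again by \eqref{eq:degree continuity}. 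Combining the continuity of $ 1/d(x) $ with that of $ x \mapsto \int_0^1 w(x,y) \ts \varphi(y) \ts \mathrm{d}y $ gives the continuity of $ \mathcal{K}\varphi $, and therefore of $ \varphi = \mathcal{K}\varphi / \lambda $.

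The argument is essentially standard once the bootstrapping order is set; the only subtle point, and the place where one must be careful, is that the bound $ \norm{\varphi}_{L^1} < \infty $ requires invoking the $ L^2_\pi \cong L^2 $ isomorphism (so that an $ L^2_\pi $-eigenfunction is actually in $ L^1 $ on $ [0,1] $) rather than attempting to argue in $ L^2_\pi $ directly.
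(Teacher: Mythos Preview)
Your proof is correct and follows essentially the same bootstrapping route as the paper: the only cosmetic difference is that for boundedness you pull the $L^\infty$ bound $w\le 1$ out and land on $\norm{\varphi}_{L^1}$, whereas the paper applies Cauchy--Schwarz to $p\le d_0^{-1}$ and lands on $\norm{\varphi}_{L^2}$; both then invoke the $L^2_\pi\cong L^2$ identification. The continuity argument is identical to the paper's.
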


\begin{proof}
To begin, the Assumption~\ref{assumpt:degree} gives that there exists a $d_0 > 0$ such that $ d(x) \geq d_0 $ for all $ x \in [0, 1] $ and the uniform bound $ 0 \leq w(x,y) \leq 1 $ immediately gives that $ 0 \leq p(x,y) \leq d_0^{-1} $. Recall from the fact that $ \pi $ is nonzero and bounded above, $ L^2 $ and $ L^2_\pi $ are isometrically isomorphic and therefore proving the result for eigenfunctions of $ \mathcal{K} \colon L^2 \to L^2$ will prove it for $\mathcal{K} \colon L_\pi^2 \to L_\pi^2$ too.

Now, suppose $ \lambda_\ell $ is a nonzero eigenvalue of $ \mathcal{K} $ with eigenfunction $ \varphi_\ell \in L^2 $, i.e., $ \mathcal{K} \varphi_\ell(x) = \lambda_\ell \ts \varphi_\ell(x) $. Then, the Cauchy--Schwarz inequality gives that for each $ x \in [0, 1] $ we have
\begin{align*}
    \abs{\varphi_\ell(x)} &= \bigg|\frac{1}{\lambda_\ell} \mathcal{K} \varphi_\ell(x)\bigg| \leq \frac{1}{\abs{\lambda_\ell}} \int_0^1 \abs{p(x,y) \ts \varphi_\ell(y)} \ts \mathrm{d}y \\
        & \leq \frac{1}{\abs{\lambda_\ell}} \bigg(\int_0^1 |p(x,y)|^2 \mathrm{d}y\bigg)^\frac{1}{2}\bigg(\int_0^1 |\varphi_\ell(y)|^2 \mathrm{d}y\bigg)^\frac{1}{2} \\
        &\leq \frac{1}{d_0 \abs{\lambda_\ell}}\|\varphi_\ell\|_{L^2}.
\end{align*}
Thus, we have a bound on $ |\varphi_\ell(x)| $ that is independent of $x$, showing that the eigenfunction is bounded.

Now, let us further assume that \eqref{eq:degree continuity} holds. One can easily check that this implies that $ d $ is continuous in $x$. Rearranging the eigenfunction relationship $ \mathcal{K} \varphi_\ell(x) = \lambda_\ell \ts \varphi_\ell(x) $ as
\begin{equation*}
    \varphi_\ell(x) = \frac{1}{\lambda_\ell \ts d(x)} \int_0^1 w(x,y) \ts \varphi_\ell(y) \ts \mathrm{d}y
\end{equation*}
gives that $ \varphi_\ell $ is continuous so long as $ x \mapsto \int_0^1 w(x, y) \ts \varphi_\ell(y) \ts \mathrm{d}y $ is continuous since we have that $ \lambda_\ell \ts d(x) $ is continuous and bounded away from zero. Letting $ a \in [0, 1] $ and recalling that we have already shown that there exists $ M > 0 $ such that $ \abs{\varphi_\ell(x)} \le M $ for all $ x \in [0, 1] $, continuity of $ x \mapsto \int_0^1 w(x,y) \varphi_\ell(y) \ts \mathrm{d}y $ is a result of
\begin{equation*}
    \begin{split}
        \lim_{x \to a} &\bigg|\int_0^1 w(x, y) \ts \varphi(y) \ts \mathrm{d}y - \int_0^1 w(a, y) \ts \varphi_\ell(y) \ts \mathrm{d}y\bigg|\\ &\leq M \cdot \lim_{x \to a} \int_0^1 \abs{w(x, y) - w(a, y)} \ts \mathrm{d}y = 0,
    \end{split}
\end{equation*}
where we have used condition \eqref{eq:degree continuity}.
\end{proof}

It is important to note that nothing in the above lemma requires that the graphon be symmetric, and so these results still hold for asymmetric graphons as well, under Assumption~\ref{assumpt:degree}. While condition \eqref{eq:degree continuity} may be unintuitive, it is used to show that the degree function $ d $ is continuous. This condition is easily satisfied when the graphon $ w $ itself is continuous, but can hold for discontinuous graphons as well, see \cite{bramburger2024persistence}.

%%%%%%%%%%%%%%%%%%%%%%%%%%%%%%%%%%%%%%%%%%%%%%%%%%%%%%%%%%%%%%%%%%%%%%%%%%%%%%%%%%%%%%%%%%
\subsection{Learning transfer operators from data}

We now present a data-driven method for estimating transfer operators associated with graphons. Assuming we have random walk data of the form $ \{ x^{(1)}, x^{(2)}, \dots, x^{(m+1)} \} $ pertaining to a potentially unknown graphon, we define data pairs $ \big\{ (x^{(k)}, y^{(k)}) \big\}_{k=1}^m $, where $ y^{(k)} = x^{(k+1)} $. If $ m $ is large enough, the points $ x^{(k)} $ and $ y^{(k)} $ are approximately sampled from the invariant distribution~$ \pi $. In addition to the training data, we need to choose a set of basis functions $ \{ \phi_i \}_{i=1}^n $, termed a \emph{dictionary}. We then construct the vector-valued function $ \phi(x) = [\phi_1(x), \phi_2(x), \dots, \phi_n(x)]^\top $. The uncentered covariance and cross-covariance matrices $ \widetilde{C}_{xx} $ and $ \widetilde{C}_{xy} $, defined by
\begin{alignat*}{2}
    \widetilde{C}_{xx} &:= \frac{1}{m} \sum_{k=1}^m \phi(x^{(k)}) \otimes \phi(x^{(k)}) \\ &\underset{\scriptscriptstyle m \rightarrow \infty}{\longrightarrow} \int_0^1 \phi(x) \otimes \phi(x) \ts \pi(x) \ts \mathrm{d}x = C_{xx}, \\
    \widetilde{C}_{xy} &:= \frac{1}{m} \sum_{k=1}^m \phi(x^{(k)}) \otimes \phi(y^{(k)}) \\ &\underset{\scriptscriptstyle m\rightarrow \infty}{\longrightarrow}
    \int_0^1 \phi(x) \otimes \mathcal{K} \phi(x) \ts \pi(x) \ts \mathrm{d}x = C_{xy},
\end{alignat*}
converge in the infinite data limit to the matrices $ C_{xx} $ and $ C_{xy} $ required for the Galerkin approximation. That is, the matrix $ \widetilde{K} = \widetilde{C}_{xx}^{-1} \widetilde{C}_{xy} $ can be viewed as a data-driven Galerkin approximation of the Koopman operator with respect to the $ \pi $-weighted inner product, see Appendix~\ref{app:Galerkin approximation} for more details. This approach is called \emph{extended dynamic mode decomposition} (EDMD) \cite{WKR15, KKS16} and its convergence in the infinite data limit is guaranteed by results in \cite{KKS16, KoMe18, bramburger2024auxiliary}. The adjoint of $ \mathcal{K} $ with respect to the $ \pi $-weighted inner product is the the reweighted Perron--Frobenius operator~$ \mathcal{T} $, see Proposition~\ref{pro:properties of transfer operators}, which can be computed using Lemma~\ref{lem:Galerkin adjoint}, but for reversible systems, it holds that $ \mathcal{T} = \mathcal{K} $ as shown above. The eigenfunctions of the Perron--Frobenius operator $ \mathcal{P} $ can be computed with the aid of Lemma~\ref{lem:eigenfunction properties}, either using the true invariant density \eqref{eq:invariant density} or estimating it from trajectory data. This will be illustrated in more detail below. We are therefore now in a position to approximate eigenvalues and eigenfunctions of projected transfer operators, which allows us to reconstruct transition densities and the graphon itself and in particular also to apply spectral clustering methods.

\begin{talgorithm}[Data-driven spectral clustering algorithm for symmetric graphons] $ $
\begin{enumerate} \setlength{\itemsep}{0ex}
\item Given training data $ \{ (x^{(k)}, y^{(k)}) \}_{k=1}^m $ and the vector-valued function $ \phi $.
\item Compute the $r$ dominant eigenvalues $ \lambda_\ell $ and associated eigenvectors $ \xi_\ell $ of $ \widetilde{K} $.
\item Evaluate the resulting eigenfunctions $ \widetilde{\varphi}_\ell(x) := \xi_\ell^\top \phi(x) $ at all points $ x^{(k)} $.
\item Let $ s_i \in \R^{r} $ denote the (transposed) $ i $th row of
\begin{equation*}
    \boldsymbol{\widetilde{\varphi}{}} =
    \begin{bmatrix}
        \widetilde{\varphi}_1(x^{(1)}) & \dots & \widetilde{\varphi}_r(x^{(1)}) \\
        \vdots & \ddots & \vdots \\
        \widetilde{\varphi}_1(x^{(m)}) & \dots & \widetilde{\varphi}_r(x^{(m)}) \\
    \end{bmatrix}
    \in \R^{m \times r}.
\end{equation*}
\item Cluster the points $ \{ \ts s_i \ts \}_{i=1}^m $ into $ r $ clusters using, e.g., $ k $-means.
\end{enumerate}
\end{talgorithm}

The parameter $ r $ should be chosen in such a way that there is a spectral gap between $ \lambda_r $ and $ \lambda_{r+1} $. If no such spectral gap exists, this in general implies that there are no clearly separated clusters. The accuracy of the estimated eigenfunctions depends strongly on the basis functions and the graphon itself. While monomials typically lead to ill-conditioned matrices, indicator functions, Gaussian functions, or orthogonal polynomials in general work well. Choosing the right type and optimal number of basis functions is an open problem. In the last years, many dictionary learning methods have been proposed. The idea is to train a deep or shallow neural network whose output layer represents the basis functions. Loss functions are typically either based on the reconstruction error or some variational formulation~\cite{LDBK17, MPWN18, TLK25}. Convergence proofs and error bounds, which could be extended to the graphon case, can be found in \cite{KoMe18, ZZ23, LLLK24}. If the graphon $ w $ is known, it is of course also possible to directly compute the integrals required for the Galerkin approximation using, for instance, numerical quadrature rules instead of relying on Monte Carlo estimates.

\begin{remark}
We can define the random-walk normalized graphon Laplacian by $ \mathcal{L}_\text{rw} = \mathcal{I} - \mathcal{K} $. This is consistent with the definition of the random-walk normalized graph Laplacian in the finite-dimensional case. Assume that $ \mathcal{K} \varphi_\ell = \lambda_\ell \ts \varphi_\ell $, then $ \mathcal{L}_\text{rw} \ts \varphi_\ell = (\mathcal{I} - \mathcal{K}) \ts \varphi_\ell = (1 - \lambda_\ell) \ts \varphi_\ell $, i.e., the eigenfunctions of the random-walk normalized Laplacian are the eigenfunctions of the Koopman operator. Therefore, we see that clustering according to the Koopman eigenfunctions is equivalent to the more standard clustering with the Laplacian eigenfunctions.
\end{remark}

%%%%%%%%%%%%%%%%%%%%%%%%%%%%%%%%%%%%%%%%%%%%%%%%%%%%%%%%%%%%%%%%%%%%%%%%%%%%%%%%%%%%%%%%%%
\subsection{Numerical demonstrations}

Let us illustrate the clustering approach and the reconstruction of the graphon itself with the aid of examples.

%%%%%%%%%%%%%%%%%%%%%%%%%%%%%%%%%%%%%%%%%%%%%%%%%%%%%%%%%%%%%%%%%%%%%%%%%%%%%%%%%%%%%%%%%%
\subsubsection{Synthetic data}

\begin{figure*}
    \definecolor{matlab1}{RGB}{0, 114, 189}
    \definecolor{matlab2}{RGB}{217, 83, 25}
    \definecolor{matlab3}{RGB}{237, 177, 32}
    \newcommand{\cdash}[1]{\textcolor{#1}{\rule[0.5ex]{1em}{0.3ex}}}
    \centering
    \begin{minipage}[t]{0.28\linewidth}
        \centering
        \subfiguretitle{(a)}
        \vspace*{0.5ex}
        \includegraphics[height=3.5cm]{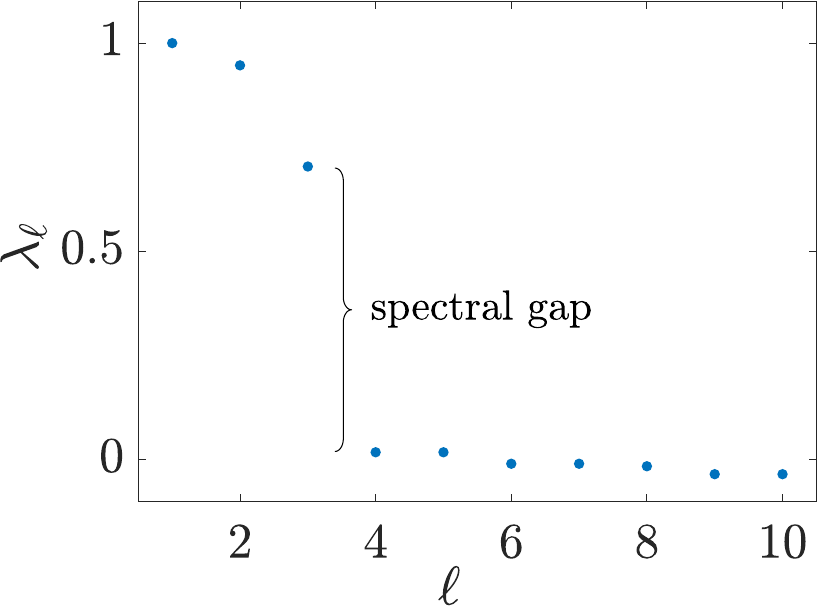}
    \end{minipage}
    \begin{minipage}[t]{0.28\linewidth}
        \centering
        \subfiguretitle{(b)}
        \vspace*{0.5ex}
        \includegraphics[height=3.48cm]{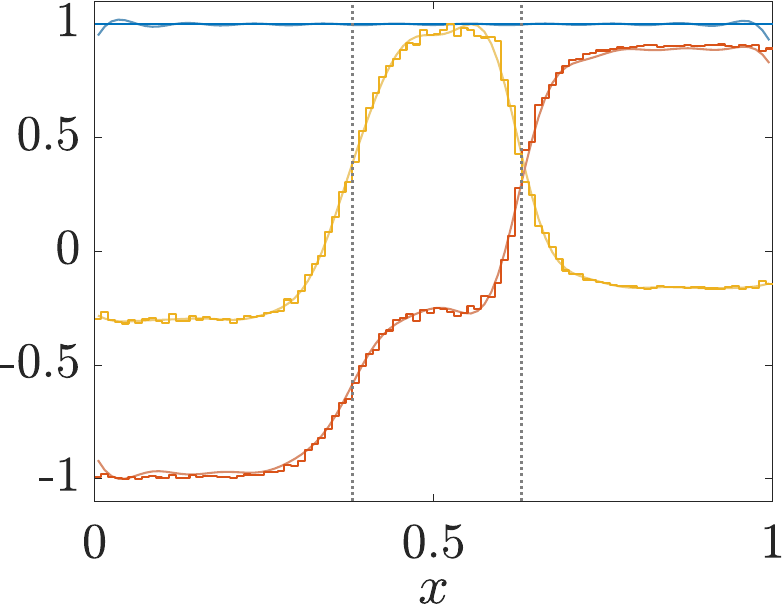}
    \end{minipage}
    \begin{minipage}[t]{0.28\linewidth}
        \centering
        \subfiguretitle{(c)}
        \vspace*{0.5ex}
        \includegraphics[height=3.5cm]{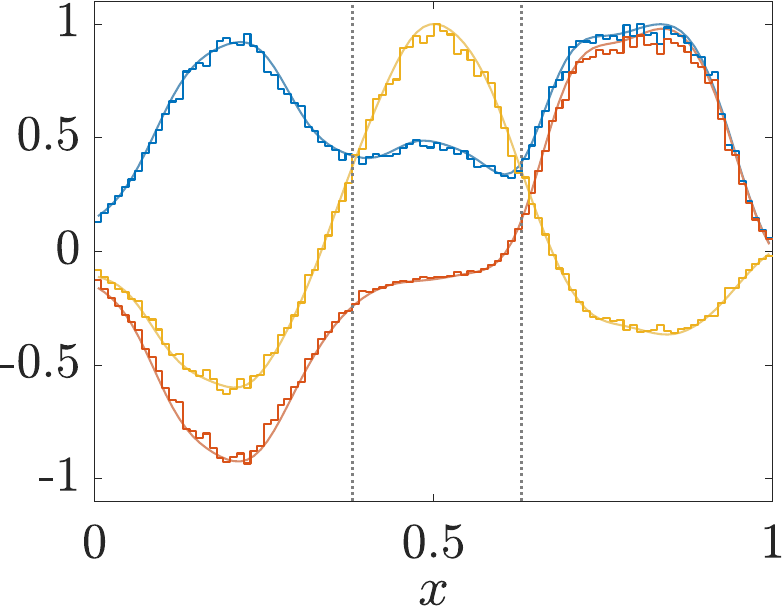}
    \end{minipage} \\[2ex]
    \begin{minipage}[t]{0.28\linewidth}
        \centering
        \subfiguretitle{(d)}
        \includegraphics[height=3.5cm]{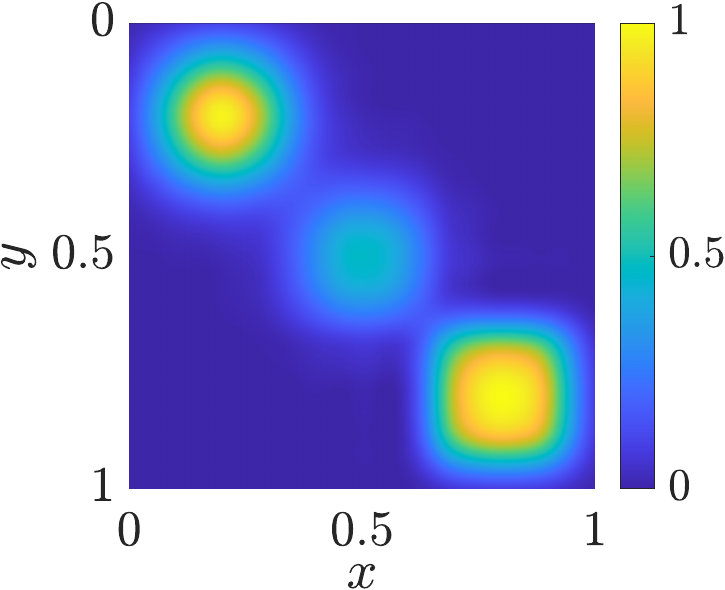}
    \end{minipage}
    \begin{minipage}[t]{0.28\linewidth}
        \centering
        \subfiguretitle{(e)}
        \includegraphics[height=3.5cm]{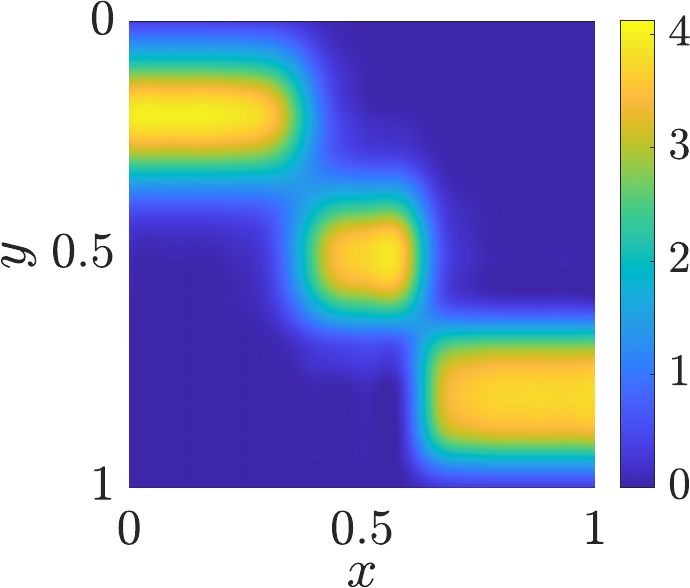}
    \end{minipage}
    \begin{minipage}[t]{0.28\linewidth}
        \centering
        \subfiguretitle{(f)}
        \includegraphics[height=3.5cm]{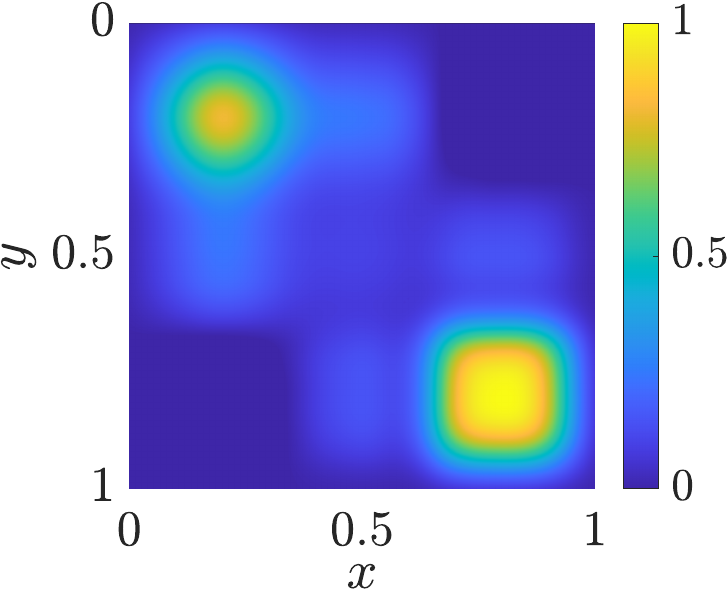}
    \end{minipage}
    \caption{(a)~First ten numerically computed eigenvalues of the Koopman and Perron--Frobenius operators. There is a large spectral gap between the third and fourth eigenvalue. (b)~Dominant eigenfunctions of the Koopman operator using 100 indicator functions (dark-colored) and 20 Gaussians (light-colored), where \cdash{matlab1} denotes the first, \cdash{matlab2} the second, and \cdash{matlab3} the third eigenfunction. The dotted gray lines separate the detected clusters. (c)~Numerically computed dominant eigenfunctions of the Perron--Frobenius operator. (d)~Resulting rank-3 approximation of $ w $ using the light-colored eigenfunctions. (e)~Rank-3 approximation of $ p $. (f)~Rank-2 approximation of $ w $ for comparison. The middle peak is missing and there are some artifacts.}
    \label{fig:triple-peak eigenfunctions}
\end{figure*}

As a first demonstration of the power of our data-driven spectral clustering algorithm, we apply it to the symmetric graphon defined in Example~\ref{ex:guiding examples}. We first generate a random walk of length $m = 20\ts000$ and then apply EDMD to estimate the dominant eigenfunctions of the Koopman operator using two different dictionaries:
\begin{enumerate}[label=\roman*)]
\item 100 indicator functions for an equipartition of $ [0, 1] $ into 100 intervals (i.e., Ulam's method);
\item 20 Gaussian functions with bandwidth $ \sigma = 0.05 $ centered at evenly-spaced points in $ [0, 1] $.
\end{enumerate}
The graphon structure leads to three dominant eigenvalues $ \lambda_1 = 1 $, $ \lambda_2 \approx 0.95 $, and $ \lambda_3 \approx 0.71 $, followed by a significant spectral gap, as shown in Figure~\ref{fig:triple-peak eigenfunctions}\ts(a). The subsequent eigenvalues after the gap are approximately zero. The estimated eigenfunctions of $ \mathcal{K} $ and the resulting clustering into three groups are shown in Figure~\ref{fig:triple-peak eigenfunctions}\ts(b). We compute the eigenfunctions of the Perron--Frobenius operator, shown in Figure~\ref{fig:triple-peak eigenfunctions}\ts(c), by evoking Lemma~\ref{lem:eigenfunction properties}, where we estimate the invariant density using kernel density estimation, see, e.g., \cite{Bishop06}. The first eigenfunction matches the analytically computed invariant density $ \pi $. Figures~\ref{fig:triple-peak eigenfunctions}\ts(d) and (e) show rank-3 approximations of the graphon $ w $ and transition density function $ p $, cf. Figure~\ref{fig:guiding examples}\ts(a) and (b), computed using the numerically approximated eigenfunctions, per Remark~\ref{rem:low-rank approximation}. Moreover, Figure~\ref{fig:triple-peak eigenfunctions}\ts(f) shows that the rank-2 reconstruction of $ w $ does not contain the middle peak and therefore misses critical information about the clusters. In order to measure how metastable the three detected clusters are, we estimate transition probabilities between them by counting the numbers of transitions contained in our training data. We define the transition matrix $ C $, where $ c_{ij} $ is the probability of going from cluster $ i $ to cluster $ j $ (in percent), and obtain
\begin{equation*}
    C =
    \begin{bmatrix} %[rrr]
       \mathbf{91.4} & 8.1 & 0.5 \\
       17.5 & \mathbf{70.2} & 12.3 \\
       0.4 & 4.4 & \mathbf{95.2}
    \end{bmatrix},
\end{equation*}
which shows that the middle cluster is as expected less metastable.

%%%%%%%%%%%%%%%%%%%%%%%%%%%%%%%%%%%%%%%%%%%%%%%%%%%%%%%%%%%%%%%%%%%%%%%%%%%%%%%%%%%%%%%%%%
\subsubsection{Daily average temperatures}

\begin{figure*}
    \definecolor{matlab1}{RGB}{0, 114, 189}
    \definecolor{matlab2}{RGB}{217, 83, 25}
    \definecolor{matlab3}{RGB}{237, 177, 32}
    \definecolor{matlab4}{RGB}{126, 47, 142}
    \newcommand{\cdash}[1]{\textcolor{#1}{\rule[0.5ex]{1em}{0.3ex}}}
    \centering
    \begin{minipage}[t]{0.28\linewidth}
        \centering
        \subfiguretitle{(a)}
        \vspace*{0.5ex}
        \includegraphics[height=3.25cm]{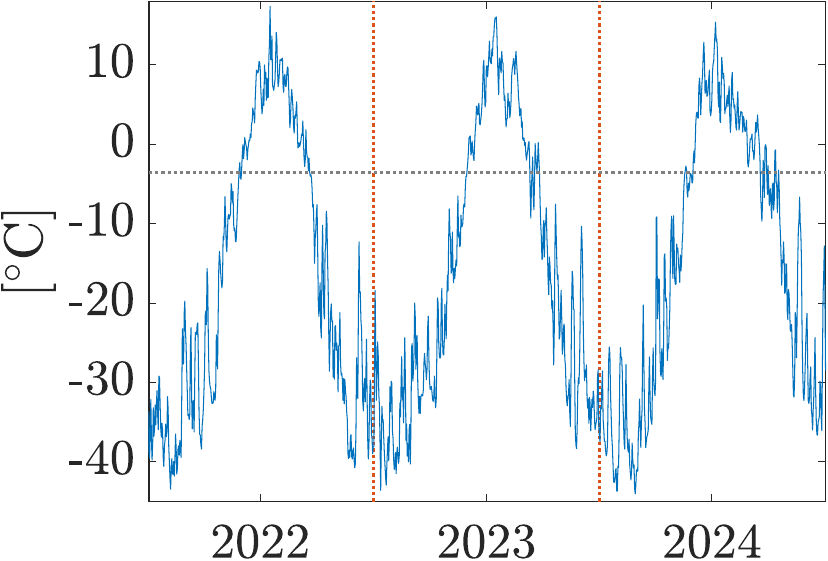}
    \end{minipage}
    \begin{minipage}[t]{0.28\linewidth}
        \centering
        \subfiguretitle{(b)}
        \vspace*{0.5ex}
        \includegraphics[height=3.5cm]{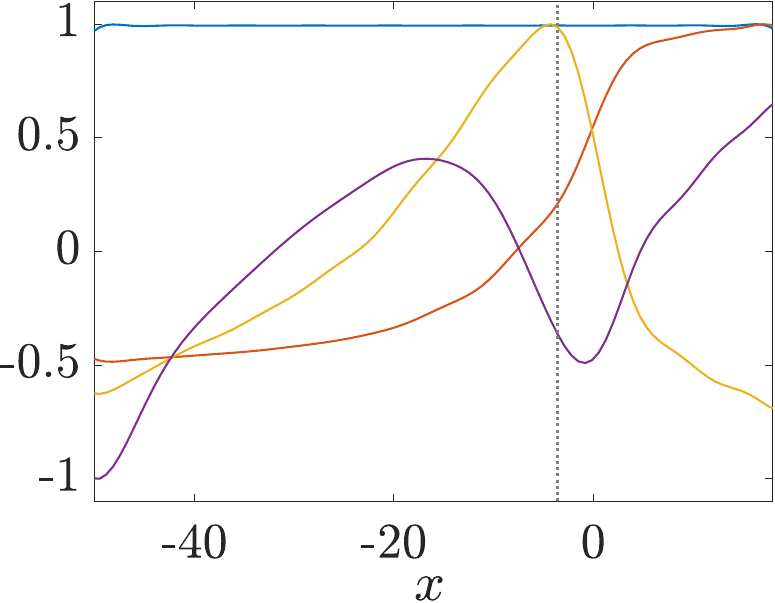}
    \end{minipage}
    \begin{minipage}[t]{0.28\linewidth}
        \centering
        \subfiguretitle{(c)}
        \vspace*{0.5ex}
        \includegraphics[height=3.5cm]{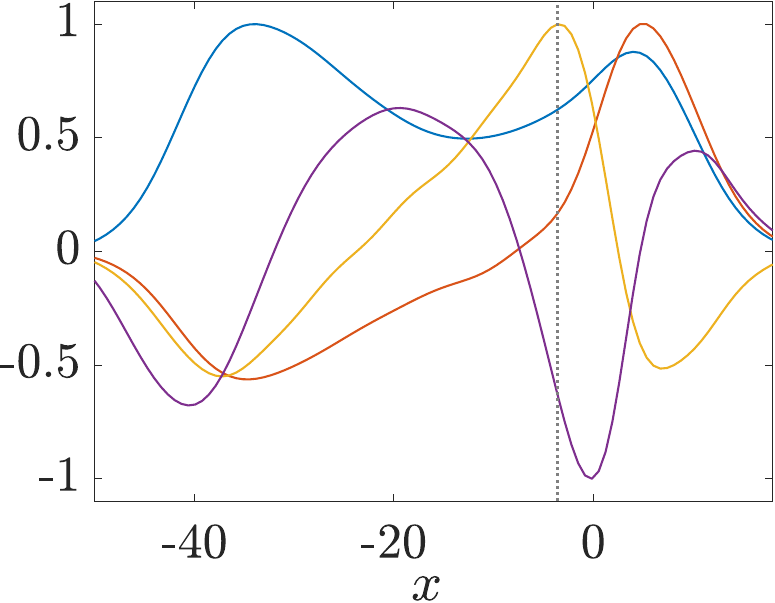}
    \end{minipage} \\[2ex]
    \begin{minipage}[t]{0.28\linewidth}
        \centering
        \subfiguretitle{(d)}
        \vspace*{0.75ex}
        \includegraphics[height=3.47cm]{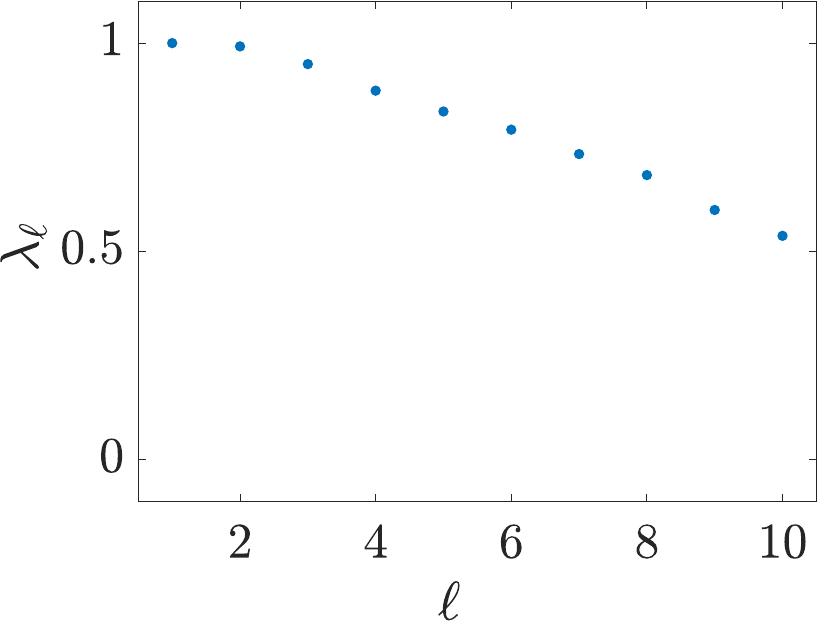}
    \end{minipage}
    \begin{minipage}[t]{0.28\linewidth}
        \centering
        \subfiguretitle{(e)}
        \includegraphics[height=3.5cm]{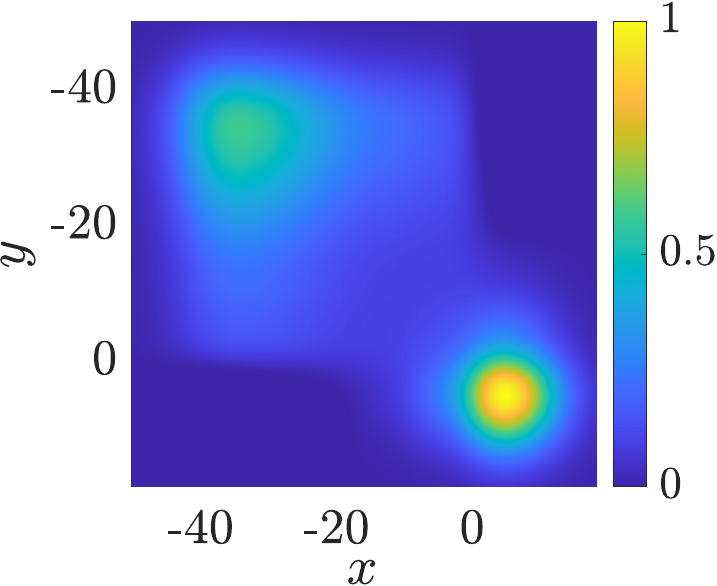}
    \end{minipage}
    \begin{minipage}[t]{0.28\linewidth}
        \centering
        \subfiguretitle{(f)}
        \includegraphics[height=3.5cm]{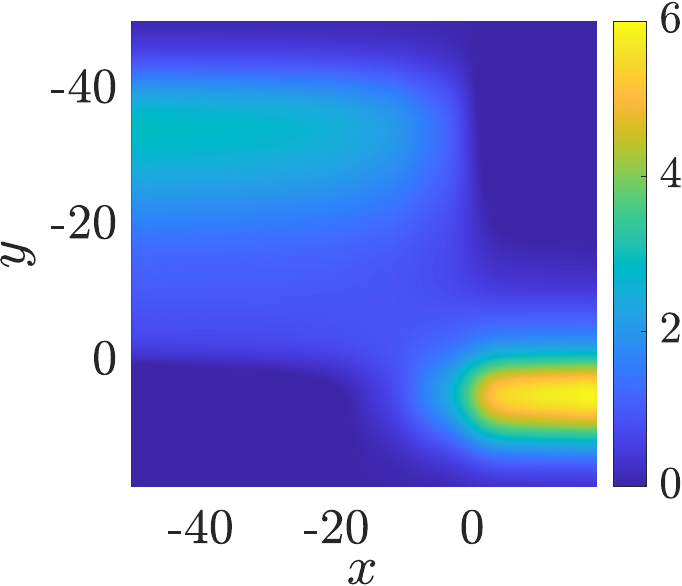}
    \end{minipage}
    \caption{(a)~Eureka's daily average temperature for the years 2022 to 2024. (b)~Dominant four eigenfunctions of the Koopman operator, where \cdash{matlab1} denotes the first, \cdash{matlab2} the second, \cdash{matlab3} the third, and \cdash{matlab4} the fourth eigenfunction. The dotted gray line represents the clustering into two sets. (c)~Dominant four eigenfunctions of the Perron--Frobenius operator. (d)~Eigenvalues of the operators. There is no clear spectral gap in this case. (e)~Rank-2 approximation of the estimated graphon $ w $. (f)~Rank-2 approximation of the estimated transition probability density $ p $.}
    \label{fig:temperature data}
\end{figure*}

In this demonstration we consider the daily average temperature in Eureka from 2005 to 2024, downloaded from \href{https://eureka.weatherstats.ca}{eureka.weatherstats.ca}. A snapshot of the data for the years 2022 to 2024 is shown in Figure~\ref{fig:temperature data}\ts(a), showing the expected seasonal variation between warm and cold temperatures over the course of each year. Our goal is to learn a graphon that approximately governs this data and leads to the transition probabilities between temperatures. Furthermore, we are again interested in the dominant eigenvalues and eigenfunctions associated with the data. For the sake of simplicity, we assume the system is reversible. This means that we seed the data-driven method with both the forward dataset $ \{(x^{(k)}, y^{(k)})\}_{k = 1}^m $ and the backward dataset $ \{(y^{(k)}, x^{(k)})\}_{k = 1}^m $ to learn a symmetric graphon. We begin by linearly scaling the temperature data to the interval $ [0, 1] $ to align with the theoretical presentation. We again employ a dictionary comprising 20 Gaussian functions with bandwidth $ \sigma = 0.05 $ and apply EDMD. The first four eigenfunctions of the Koopman and Perron--Frobenius operators, plotted as a function of temperature, are shown in Figures~\ref{fig:temperature data}\ts(b) and (c), respectively, and their corresponding eigenvalues in Figure~\ref{fig:temperature data}\ts(d). The lack of a clear spectral gap indicates that there are no well-defined clusters in this case. The resulting rank-2 reconstruction of the graphon $ w $ and the transition density function $ p $ are shown in Figures~\ref{fig:temperature data}\ts(e) and (f). The densities $ p(x, \,\cdot\,) $ predict, as expected, that the temperature tomorrow will be close to the temperature today. Note, however, that the variance of the temperature during the winter is larger than during the summer. Splitting the graphon into two clusters, the boundary is at approximately $ -4^\circ $C and we detect the two peaks around $ -33^\circ $C and $ 5^\circ $C. Using more eigenfunctions for the reconstruction leads to additional smaller peaks.

%%%%%%%%%%%%%%%%%%%%%%%%%%%%%%%%%%%%%%%%%%%%%%%%%%%%%%%%%%%%%%%%%%%%%%%%%%%%%%%%%%%%%%%%%%
\section{Extension to asymmetric graphons}
\label{sec:Extension to asymmetric graphons}

Compared to their symmetric counterparts, asymmetric graphons remain relatively unexplored in the literature and getting results akin to the symmetric case is much more delicate. In this section we will define the relevant transfer operators for asymmetric graphons and provide numerical results demonstrating how spectral clustering can still be applied to the asymmetric case. We will furthermore show that it is possible to learn the transition density function $ p $ or low-rank approximations thereof, but not the graphon $ w $ itself. However, a more rigorous analysis of the asymmetric case remains outstanding.

%%%%%%%%%%%%%%%%%%%%%%%%%%%%%%%%%%%%%%%%%%%%%%%%%%%%%%%%%%%%%%%%%%%%%%%%%%%%%%%%%%%%%%%%%%
\subsection{Transfer operators}

Given a (not necessarily symmetric) graphon $ w $, we again consider the transition probabilities according to Definition~\ref{def:transition density function}. However, we cannot rely on the existence of a uniquely defined invariant density anymore and need to consider different reweighted Hilbert spaces. Given a strictly positive reference density $ \mu $, let $ \nu = \mathcal{P} \mu $. We assume that $ \nu $ is strictly positive as well. Following the derivations in~\cite{KWNS18}, we can now define transfer operators for asymmetric graphons.

\begin{definition}[Koopman and reweighted Perron--Frobenius operators]
Given an asymmetric graphon $ w $ with associated transition density function $ p $, let $ u $ be a probability density with respect to $ \mu $ so that $ u = \frac{\rho}{\mu} $, where $ \rho $ is a probability density.
\begin{enumerate}[label=\roman*)]
\item The Koopman operator $ \mathcal{K} \colon L_\nu^2 \to L_\mu^2 $ is defined by
\begin{equation*}
    \mathcal{K} f(x) = \int p(x, y) \ts f(y) \ts \mathrm{d}y = \mathbb{E}[f(\Theta(x))].
\end{equation*}
\item The \emph{reweighted Perron--Frobenius operator} $ \mathcal{T} \colon L_\mu^2 \to L_\nu^2 $ is defined by
\begin{equation*}
    \mathcal{T} u(x) = \frac{1}{\nu(x)} \int p(y, x) \ts \mu(y) \ts u(y) \ts \mathrm{d}y.
\end{equation*}
\end{enumerate}
\end{definition}

The operator $ \mathcal{T} $ maps densities with respect to $ \mu $ to densities with respect to $ \nu $. For symmetric graphons, choosing $ \mu = \pi $ implies $ \nu = \pi $ so that we obtain the reweighted Perron--Frobenius operator introduced in Definition~\ref{def:reweighted Perron-Frobenius operator} as a special case. In order to be able to use the spectral theory developed for self-adjoint operators again, we define two additional operators. The relationships between all these operators will be detailed below.

\begin{definition}[Forward--backward and backward--forward operators]
Let $ u = \frac{\rho}{\mu} $ be a probability density with respect to $ \mu $ again and $ \rho $ a probability density.
\begin{enumerate}[label=\roman*)]
\item We define the \emph{forward--backward operator} $ \mathcal{F} \colon L_\mu^2 \to L_\mu^2 $ by
\begin{equation*}
    \mathcal{F} u(x) = \int p(x, y) \frac{1}{\nu(y)} \int p(z, y) \ts \mu(z) \ts u(z) \ts \mathrm{d}z \ts \mathrm{d}y.
\end{equation*}
\item Similarly, we define the \emph{backward--forward operator} $ \mathcal{B} \colon L_\nu^2 \to L_\nu^2 $ by
\begin{equation*}
    \mathcal{B} f(x) = \frac{1}{\nu(x)} \int p(y, x) \ts \mu(y) \ts \int p(y, z) \ts f(z) \ts \mathrm{d}z \ts \mathrm{d}y.
\end{equation*}
\end{enumerate}
\end{definition}

Note that $ \mathcal{F} = \mathcal{K} \mathcal{T} $ and $ \mathcal{B} = \mathcal{T} \mathcal{K} $. In related work, these operators have be used to detect coherent sets \cite{Froyland13, BaKo17, KHMN19}.

\begin{proposition}
The operators satisfy:
\begin{enumerate}[label=\roman*)]
\item $ \innerprod{\mathcal{T} u}{f}_{\nu} = \innerprod{u}{\mathcal{K} f}_{\mu} $.
\item $ \mathcal{T} \mathds{1} = \mathds{1} $ and $ \mathcal{K} \mathds{1} = \mathds{1} $.
\item $ \innerprod{\mathcal{F} u}{v}_\mu = \innerprod{u}{\mathcal{F} v}_\mu $ and $ \innerprod{\mathcal{B} f}{g}_\nu = \innerprod{f}{\mathcal{B} g}_\nu $.
\item The spectra of $ \mathcal{F} $ and $ \mathcal{B} $ are contained in the unit disk.
\end{enumerate}
\end{proposition}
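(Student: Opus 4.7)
The plan is to prove the four properties in the order stated, with each serving as a building block for the next, and ultimately reducing everything to a Fubini computation combined with a pair of Jensen-type contraction estimates. Property~(i) follows from unpacking the definition of $ \mathcal{T} $ in $ \langle \mathcal{T} u, f \rangle_\nu $: the $ \nu(x) $ weight of the inner product cancels against the $ 1/\nu(x) $ prefactor of $ \mathcal{T} $, and exchanging the order of integration via Fubini produces $ \int u(y) \ts \mathcal{K} f(y) \ts \mu(y) \ts \mathrm{d}y = \langle u, \mathcal{K} f \rangle_\mu $. Property~(ii) is equally direct: $ \mathcal{K} \mathds{1} = \mathds{1} $ because $ p(x, \,\cdot\,) $ is a probability density for every $ x $, while for $ \mathcal{T} \mathds{1}(x) $ the inner integral $ \int_0^1 p(y, x) \ts \mu(y) \ts \mathrm{d}y $ equals $ \mathcal{P}\mu(x) = \nu(x) $ by the very definition of $ \nu $, which cancels the $ 1/\nu(x) $ prefactor.

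For property~(iii) I would invoke (i) twice. Writing $ \mathcal{F} = \mathcal{K} \mathcal{T} $ and shifting $ \mathcal{K} $ across the inner product yields
\[
\langle \mathcal{F} u, v \rangle_\mu = \langle \mathcal{K} (\mathcal{T} u), v \rangle_\mu = \langle \mathcal{T} u, \mathcal{T} v \rangle_\nu,
\]
which is manifestly symmetric in $ u $ and $ v $. The completely analogous chain $ \langle \mathcal{B} f, g \rangle_\nu = \langle \mathcal{T}(\mathcal{K} f), g \rangle_\nu = \langle \mathcal{K} f, \mathcal{K} g \rangle_\mu $ establishes self-adjointness of $ \mathcal{B} $ on $ L_\nu^2 $.

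For property~(iv), the self-adjointness just established places $ \sigma(\mathcal{F}) $ and $ \sigma(\mathcal{B}) $ on the real line, so it suffices to bound the two operator norms by~$ 1 $. I plan to deduce this from the stronger contractive estimates $ \|\mathcal{K}\|_{L_\nu^2 \to L_\mu^2} \le 1 $ and $ \|\mathcal{T}\|_{L_\mu^2 \to L_\nu^2} \le 1 $ together with submultiplicativity. For $ \mathcal{K} $, Jensen's inequality applied to the probability measure $ p(x, \,\cdot\,) \ts \mathrm{d}y $ gives $ |\mathcal{K} f(x)|^2 \le \int p(x, y) \ts |f(y)|^2 \ts \mathrm{d}y $; integrating against $ \mu $ and invoking $ \int p(x, y) \ts \mu(x) \ts \mathrm{d}x = \nu(y) $ (which is precisely $ \nu = \mathcal{P}\mu $) then yields $ \|\mathcal{K} f\|_\mu \le \|f\|_\nu $. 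For $ \mathcal{T} $, the same technique applies to the probability measure $ \frac{p(y, x) \ts \mu(y)}{\nu(x)} \ts \mathrm{d}y $, whose total mass equals~$ 1 $ again by $ \nu = \mathcal{P}\mu $; after Jensen the $ \nu(x) $ factors cancel and Fubini collapses the bound to $ \|u\|_\mu^2 $. The main subtlety of the whole proposition is contained here: each Jensen application requires a probability measure whose normalization coincides with the defining relation $ \nu = \mathcal{P}\mu $ and whose weight is compatible with the target Hilbert space so that the subsequent Fubini step telescopes cleanly. Once this pairing is set up correctly, everything else is routine.
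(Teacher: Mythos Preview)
Your proof is correct and follows essentially the same approach as the paper: the paper's brief sketch of (iii) is identical to yours (shifting $\mathcal{K}$ and $\mathcal{T}$ across the inner product via (i) to obtain the symmetric form $\langle \mathcal{T} u, \mathcal{T} v\rangle_\nu$), and for the remaining items the paper simply refers to the literature and to its earlier Proposition, whereas you spell out the standard Fubini and Jensen-contractivity arguments that those references contain. The only minor addition in the paper is the observation that $\langle \mathcal{F} u, u\rangle_\mu = \|\mathcal{T} u\|_\nu^2 \ge 0$, i.e.\ $\mathcal{F}$ is positive, which is not part of the stated proposition but is used later for the singular value decomposition.
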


\begin{proof}
We omit detailed proofs here since these results can be adapted from the stochastic differential equation case found in \cite{Froyland13, BaKo17} and are similar to the proofs of the results in Proposition~\ref{pro:properties of transfer operators}. We just briefly illustrate why $ \mathcal{F} $ is indeed a positive operator.
\begin{enumerate}[label=\roman*)] \setcounter{enumi}{2}
\item It holds that
\begin{align*}
    \innerprod{\mathcal{F} u}{v}_\mu &= \innerprod{\mathcal{K} \mathcal{T} u}{v}_\mu = \innerprod{\mathcal{T} u}{\mathcal{T} v}_\nu \\
       &= \innerprod{u}{\mathcal{K} \mathcal{T} v}_\mu = \innerprod{u}{\mathcal{F} v}_\mu
\end{align*}
and $ \innerprod{\mathcal{F} u}{u}_\mu = \norm{\mathcal{T} u}_\nu^2 \ge 0 $.  \qedhere
\end{enumerate}
\end{proof}

%%%%%%%%%%%%%%%%%%%%%%%%%%%%%%%%%%%%%%%%%%%%%%%%%%%%%%%%%%%%%%%%%%%%%%%%%%%%%%%%%%%%%%%%%%
\subsection{Spectral decompositions}

Since the operator $ \mathcal{F} $ is self-adjoint, it can be decomposed as $\mathcal{F} = \sum_\ell \lambda_\ell (\varphi_\ell \otimes_\mu \varphi_\ell)$,
where $ \varphi_\ell $ is now the $ \ell $th eigenfunction of the forward--backward operator. %Using Lemma~\ref{lem:SVD},
We then obtain the singular value decomposition $\mathcal{T} = \sum_\ell \sigma_\ell (u_\ell \otimes_\mu v_\ell),$
where $ \sigma_\ell = \lambda_\ell^{\frac{1}{2}} $, $ u_\ell = \lambda_\ell^{-\frac{1}{2}} \ts \mathcal{T} \varphi_\ell $, and $ v_\ell = \varphi_\ell $. This implies that the transition density function can be written as
\begin{equation*}
    p(x, y) = \sum_\ell \sigma_\ell \ts v_\ell(x) \ts u_\ell(y) \ts \nu(y).
\end{equation*}
As shown in Section~\ref{sec:Transfer operators of symmetric graphons}, it is possible to reconstruct symmetric graphons up to a multiplicative factor using the eigenfunctions of the Perron--Frobenius and Koopman operators. Let $ w $ now be an asymmetric graphon and define $ \widetilde{w}(x, y) = c(x) \ts w(x, y) $
where $ c $ is an arbitrary positive function.
Then $ \widetilde{d}_\text{out}(x) = c(x) \ts d_\text{out}(x) $ and
\begin{equation*}
    \widetilde{p}(x, y) = \frac{c(x) \ts w(x, y)}{c(x) \ts d_\text{out}(x)} = p(x, y).
\end{equation*}
We can thus in this case not expect to be able to recover the graphon $ w $ from the transition probability density $ p $ anymore.

%%%%%%%%%%%%%%%%%%%%%%%%%%%%%%%%%%%%%%%%%%%%%%%%%%%%%%%%%%%%%%%%%%%%%%%%%%%%%%%%%%%%%%%%%%
\subsection{Learning transfer operators from data}

Given random walk data $ \{ x^{(1)}, x^{(2)}, \dots, x^{(m+1)} \} $, we again define $ \big\{ (x^{(k)}, y^{(k)}) \big\}_{k=1}^m $, where $ y^{(k)} = x^{(k+1)} $. Assume now that $ x^{(k)} \sim \mu $, which implies $ y^{(k)} \sim \nu $. Computing the uncentered covariance and cross-covariance matrices $ \widetilde{C}_{xx} $ and $ \widetilde{C}_{xy} $ yields
\begin{alignat*}{2}
    \widetilde{C}_{xx} &:= \frac{1}{m} \sum_{k=1}^m \phi(x^{(k)}) \otimes \phi(x^{(k)}) \\ &\underset{\scriptscriptstyle m \rightarrow \infty}{\longrightarrow} \int_0^1 \phi(x) \otimes \phi(x) \ts \mu(x) \ts \mathrm{d}x = C_{xx}, \\
    \widetilde{C}_{xy} &:= \frac{1}{m} \sum_{k=1}^m \phi(x^{(k)}) \otimes \phi(y^{(k)}) \\ &\underset{\scriptscriptstyle m \rightarrow \infty}{\longrightarrow}
    \int_0^1 \phi(x) \otimes \mathcal{K} \phi(x) \ts \mu(x) \ts \mathrm{d}x = C_{xy}.
\end{alignat*}
The only difference is that the matrix $ \widetilde{K} = \widetilde{C}_{xx}^{-1} \ts \widetilde{C}_{xy} $ is now a data-driven Galerkin approximation of the Koopman operator with respect to the $ \mu $-weighted inner product, see Appendix~\ref{app:Galerkin approximation}.
The eigenvalues of the matrix $ \widetilde{K} $ are in general complex-valued. Instead of using the eigenfunctions of the Koopman operator for spectral clustering, the works \cite{Froyland13, BaKo17} suggested using the forward--backward operator $ \mathcal{F} $ instead, which corresponds to detecting finite-time coherent sets. The Galerkin projection of the operator $ \mathcal{T} $ can be approximated by $ \widetilde{T} = \widetilde{C}_{yy}^{-1} \ts \widetilde{C}_{yx} $ (see Lemma~\ref{lem:Galerkin adjoint}) and we define $ \widetilde{F} = \widetilde{C}_{xx}^{-1} \ts \widetilde{C}_{xy} \ts \widetilde{C}_{yy}^{-1} \ts \widetilde{C}_{yx} $, which is a composition of two Galerkin approximations \cite{KD24}. This leads to the following spectral clustering algorithm for asymmetric graphons.

\begin{talgorithm}[Data-driven spectral clustering algorithm for asymmetric graphons] $ $
\begin{enumerate} \setlength{\itemsep}{0ex}
\item Given training data $ \{ (x^{(k)}, y^{(k)}) \}_{k=1}^m $ and the vector-valued observable $ \phi $.
\item Compute the $ r $ dominant eigenvalues $ \lambda_\ell $ and associated eigenvectors $ \xi_\ell $ of $ \widetilde{F} $.
\item Evaluate the resulting eigenfunctions $ \widetilde{\varphi}_\ell(x) := \xi_\ell^\top \phi(x) $ in all points $ x^{(k)} $.
\item Let $ s_i \in \R^{r} $ denote the (transposed) $ i $th row of
\begin{equation*}
    \boldsymbol{\widetilde{\varphi}} =
    \begin{bmatrix}
        \widetilde{\varphi}_1(x^{(1)}) & \dots & \widetilde{\varphi}_r(x^{(1)}) \\
        \vdots & \ddots & \vdots \\
        \widetilde{\varphi}_1(x^{(m)}) & \dots & \widetilde{\varphi}_r(x^{(m)}) \\
    \end{bmatrix}
    \in \R^{m \times r}.
\end{equation*}
\item Cluster the points $ \{ \ts s_i \ts \}_{i=1}^m $ into $ r $ clusters using, e.g., $ k $-means.
\end{enumerate}
\end{talgorithm}

Note that the eigenfunctions $ \varphi_\ell $ of the forward--backward operator $ \mathcal{F} $ are the right singular functions $ v_\ell $ of the operator $ \mathcal{T} $. The corresponding left singular functions $ u_\ell $, which are required for reconstructing the transition probability densities $ p $, can then be approximated via
\begin{equation*}
    \begin{split}
        &v_\ell(x) = \varphi_\ell(x) \approx \xi_\ell^\top \phi(x)\\ &\implies u_\ell(x) = \lambda_\ell^{-\frac{1}{2}} \ts \mathcal{T} \varphi_\ell(x) \approx \lambda_\ell^{-\frac{1}{2}} \big(\widetilde{C}_{yy}^{-1} \ts \widetilde{C}_{yx} \ts \xi_\ell\big)^\top \phi(x).
    \end{split}
\end{equation*}
Moreover, the density $ \nu $ can be estimated from the $ y^{(k)} $ data using kernel density estimation.

%%%%%%%%%%%%%%%%%%%%%%%%%%%%%%%%%%%%%%%%%%%%%%%%%%%%%%%%%%%%%%%%%%%%%%%%%%%%%%%%%%%%%%%%%%
\subsection{Numerical demonstrations}

We will now illustrate the proposed data-driven clustering approach, show how the transition probability densities can be estimated, and highlight the main differences between symmetric and asymmetric graphons.

%%%%%%%%%%%%%%%%%%%%%%%%%%%%%%%%%%%%%%%%%%%%%%%%%%%%%%%%%%%%%%%%%%%%%%%%%%%%%%%%%%%%%%%%%%
\subsubsection{Synthetic data}

\begin{figure*}
    \definecolor{matlab1}{RGB}{0, 114, 189}
    \definecolor{matlab2}{RGB}{217, 83, 25}
    \definecolor{matlab3}{RGB}{237, 177, 32}
    \definecolor{matlab4}{RGB}{126, 47, 142}
    \newcommand{\cdash}[1]{\textcolor{#1}{\rule[0.5ex]{1em}{0.3ex}}}
    \centering
    \begin{minipage}[t]{0.28\linewidth}
        \centering
        \subfiguretitle{(a)}
        \vspace*{0.5ex}
        \includegraphics[height=3.5cm]{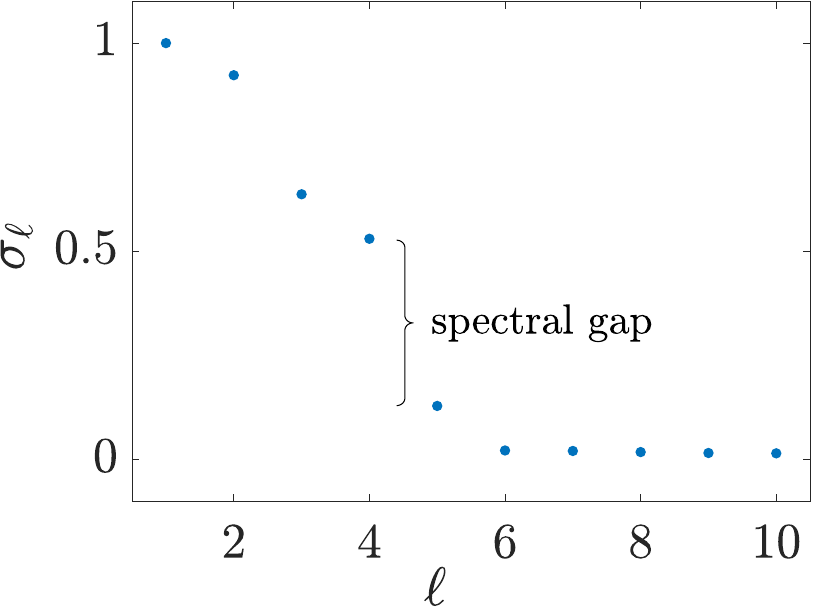}
    \end{minipage}
    \begin{minipage}[t]{0.28\linewidth}
        \centering
        \subfiguretitle{(b)}
        \vspace*{0.5ex}
        \includegraphics[height=3.5cm]{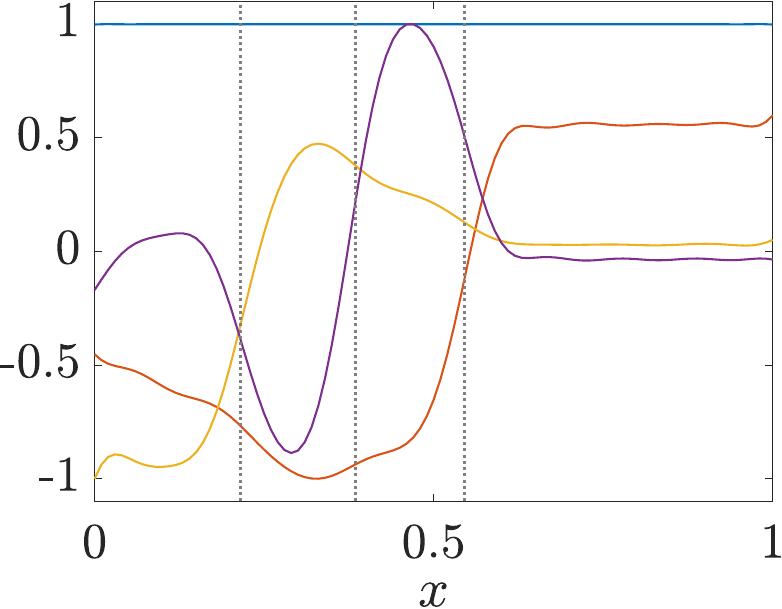}
    \end{minipage}
    \begin{minipage}[t]{0.28\linewidth}
        \centering
        \subfiguretitle{(c)}
        \vspace*{0.5ex}
        \includegraphics[height=3.5cm]{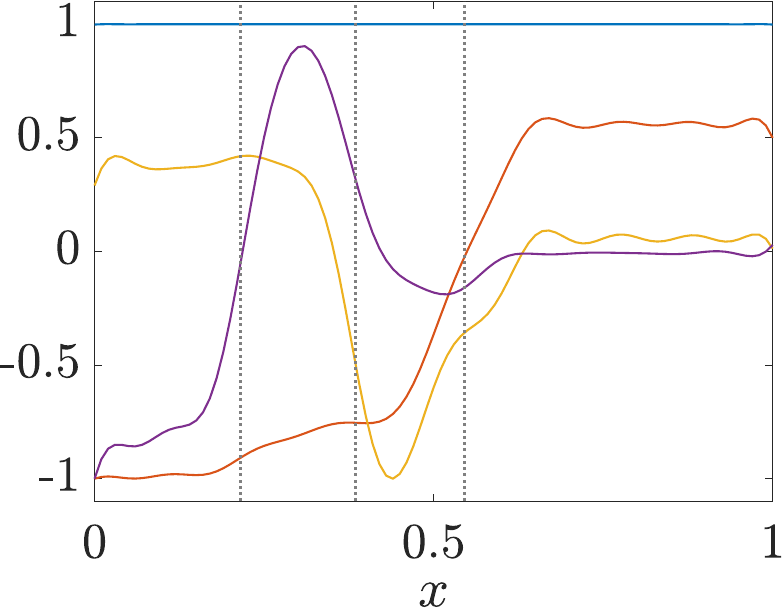}
    \end{minipage} \\[2ex]
    \begin{minipage}[t]{0.28\linewidth}
        \centering
        \subfiguretitle{(d)}
        \includegraphics[height=3.5cm]{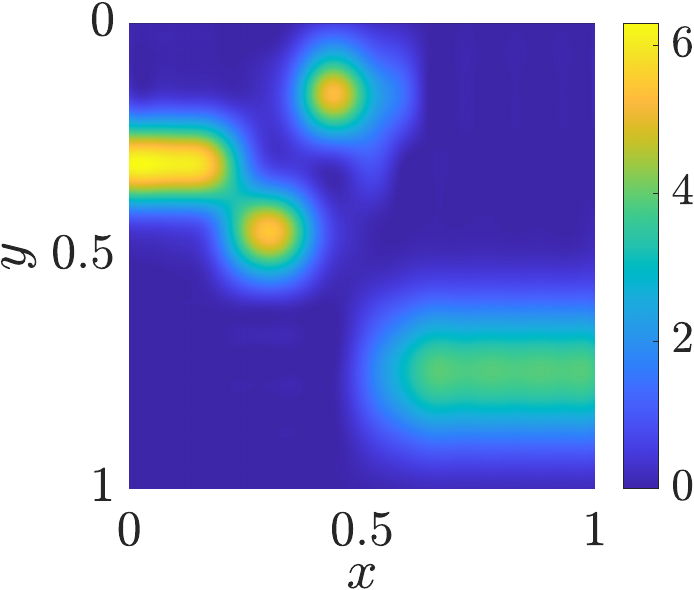}
    \end{minipage}
    \begin{minipage}[t]{0.28\linewidth}
        \centering
        \subfiguretitle{(e)}
        \includegraphics[height=3.5cm]{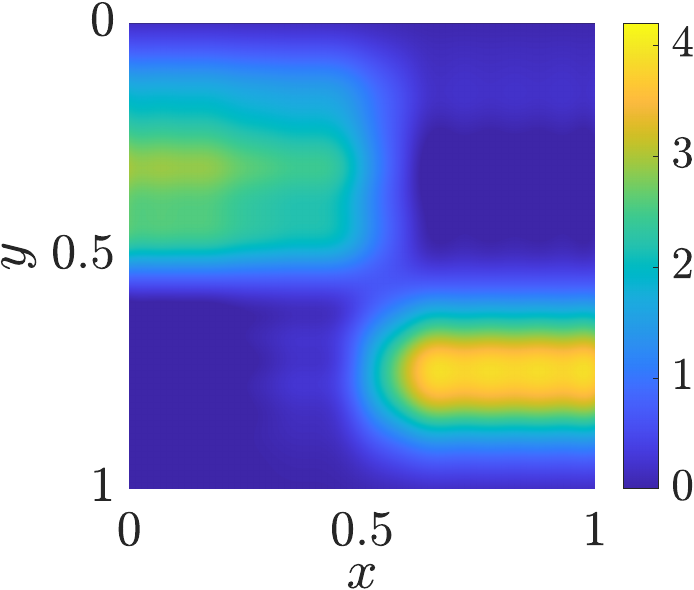}
    \end{minipage}
    \begin{minipage}[t]{0.28\linewidth}
        \centering
        \subfiguretitle{(f)}
        \includegraphics[height=3.5cm]{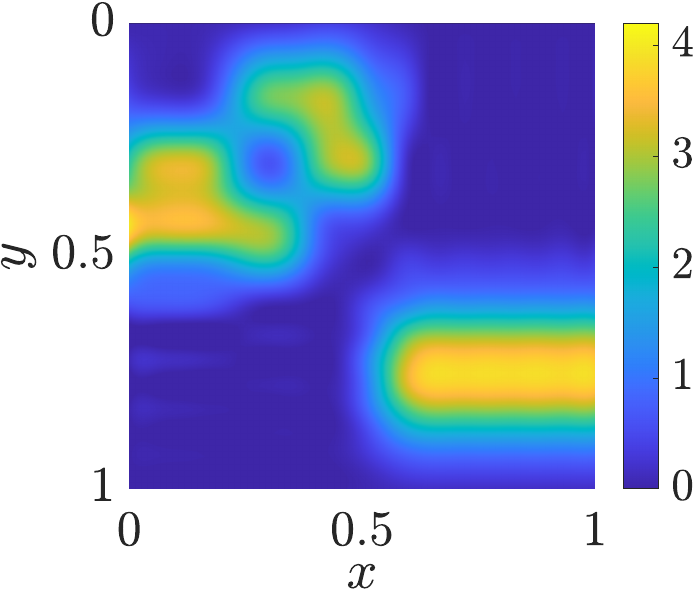}
    \end{minipage}
    \caption{(a)~Numerically computed singular values. (b)~First four left singular functions of the reweighted Perron--Frobenius operator $ \mathcal{T} $ using  20 Gaussians, where \cdash{matlab1} denotes the first, \cdash{matlab2} the second, \cdash{matlab3} the third, and \cdash{matlab4} the fourth singular function. The dotted gray lines separate the detected clusters. (c)~Corresponding right singular functions. (d)~Resulting rank-4 approximation of $ p $ using the singular functions. (e)~Rank-2 approximation using the singular functions. (f)~Rank-4 approximation of $ p $ using the eigenfunctions of the Koopman operator assuming reversibility.}
    \label{fig:quadruple-peak eigenfunctions}
\end{figure*}

As a first example, we generate a random walk of length $m = 20\ts000$ with transition probabilities derived from the asymmetric graphon in Example~\ref{ex:guiding examples}. This graphon exhibits a cluster in the interval $ [0.5, 1] $, while also having a cycle through three clusters in the interval $ [0, 0.5] $, as illustrated in Figure~\ref{fig:guiding examples}(f), within which the cycling between the clusters in the interval $ [0, 0.5 ]$ is difficult to observe directly. Our numerical demonstration that follows implements EDMD using 20 evenly-spaced Gaussian functions, each with bandwidth $ \sigma = 0.05 $. A clear spectral gap is shown in the singular values presented in Figure~\ref{fig:quadruple-peak eigenfunctions}\ts(a). The leading left and right singular functions of the learned reweighted Perron--Frobenius operator $ \mathcal{T} $ are presented in Figure~\ref{fig:quadruple-peak eigenfunctions}\ts(b) and (c), respectively, while a rank-4 approximation of the transition probability function $ p $ is given in panel (d); compare with Figure~\ref{fig:guiding examples}(e). Importantly, if the stochastic process is assumed to be reversible, then the cycling behavior of the random walk in $ [0, 0.5] $ is reduced to a single cluster. That is, an assumed symmetric process results in only two clusters and completely fails to identify the nuances of the cycling behavior between the three clusters in $ [0, 0.5] $. The transition probabilities $ C $ between the four detected clusters of the asymmetric graphon, estimated again from the training data, are
\begin{equation*}
    C =
    \begin{bmatrix}%[rrrr]
        7.8 & \mathbf{76.1} & 15.4 & 0.7 \\
        12.8 & 24.2 & \mathbf{58.1} & 4.9 \\
        \mathbf{60.8} & 16.1 & 8.6 & 14.5 \\
        2.1 & 0.4 & 1.9 & \mathbf{95.6}
    \end{bmatrix},
\end{equation*}
where $ c_{ij} $ is the probability (in per cent) of going from cluster $ i $ to cluster $ j $. The cycling behavior is seen in this transition matrix with high probabilities of transitioning from cluster 1 to 2, 2 to 3, and 3 to 1.

%%%%%%%%%%%%%%%%%%%%%%%%%%%%%%%%%%%%%%%%%%%%%%%%%%%%%%%%%%%%%%%%%%%%%%%%%%%%%%%%%%%%%%%%%%
\subsubsection{Non-reversible stochastic process}

\begin{figure*}
    \definecolor{matlab1}{RGB}{0, 114, 189}
    \definecolor{matlab2}{RGB}{217, 83, 25}
    \definecolor{matlab3}{RGB}{237, 177, 32}
    \definecolor{matlab4}{RGB}{126, 47, 142}
    \definecolor{matlab5}{RGB}{119, 172, 48}
    \newcommand{\cdash}[1]{\textcolor{#1}{\rule[0.5ex]{1em}{0.3ex}}}
    \centering
    \begin{minipage}[t]{0.28\linewidth}
        \centering
        \subfiguretitle{(a)}
        \vspace*{0.8ex}
        \includegraphics[height=3.5cm]{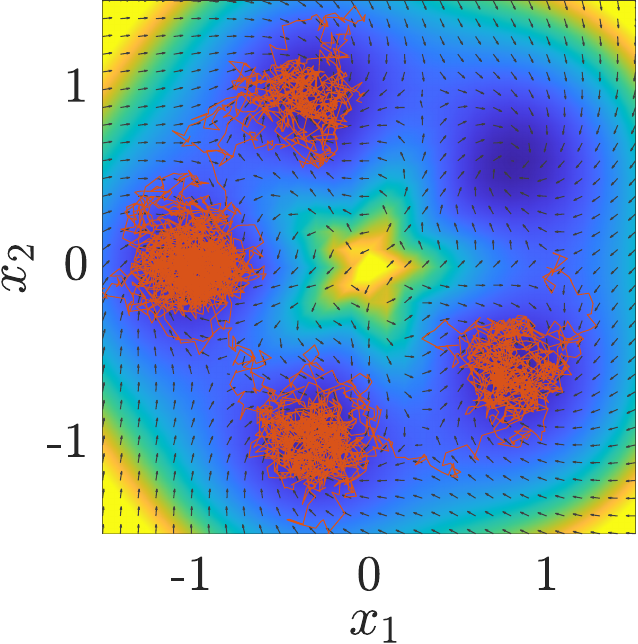}
    \end{minipage}
    \begin{minipage}[t]{0.28\linewidth}
        \centering
        \subfiguretitle{(b)}
        \includegraphics[height=3.65cm]{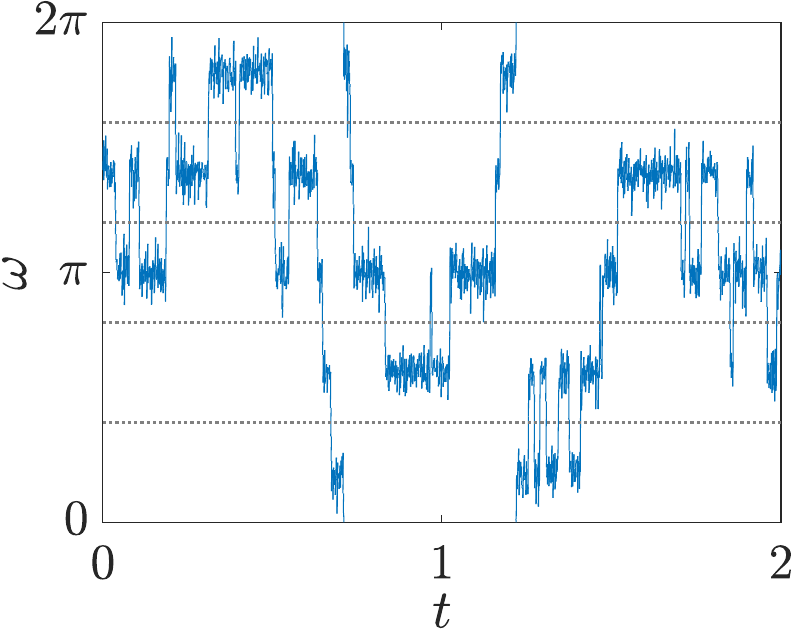}
    \end{minipage}
    \begin{minipage}[t]{0.30\linewidth}
        \centering
        \subfiguretitle{(c)}
        \vspace*{0.75ex}
        \includegraphics[height=3.5cm]{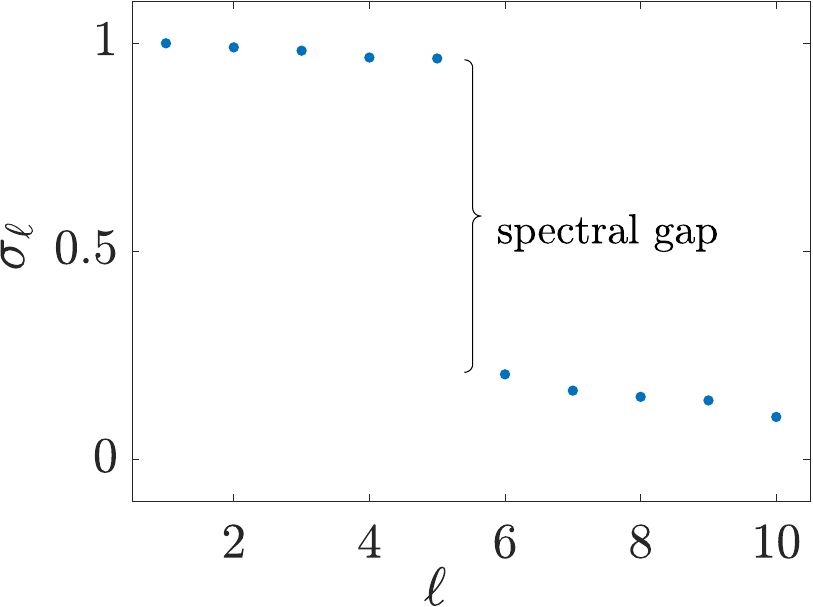}
    \end{minipage} \\[2ex]
    \begin{minipage}[t]{0.28\linewidth}
        \centering
        \subfiguretitle{(d)}
        \vspace*{0.8ex}
        \includegraphics[height=3.5cm]{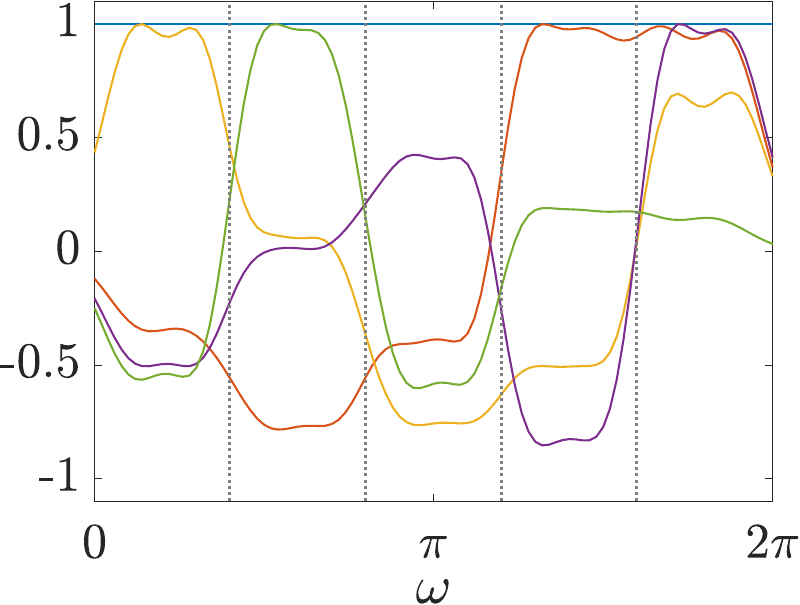}
    \end{minipage}
    \begin{minipage}[t]{0.28\linewidth}
        \centering
        \subfiguretitle{(e)}
        \vspace*{0.8ex}
        \includegraphics[height=3.5cm]{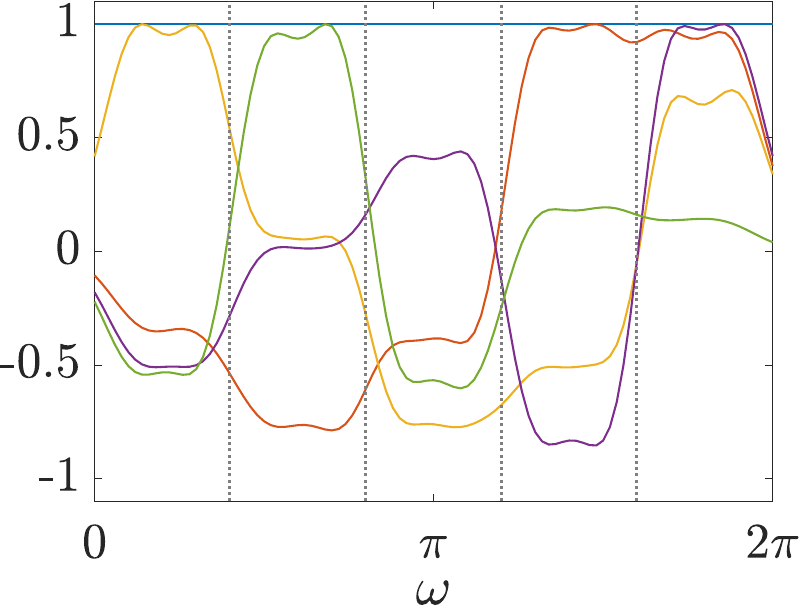}
    \end{minipage}
    \begin{minipage}[t]{0.30\linewidth}
        \centering
        \subfiguretitle{(f)}
        \includegraphics[height=3.65cm]{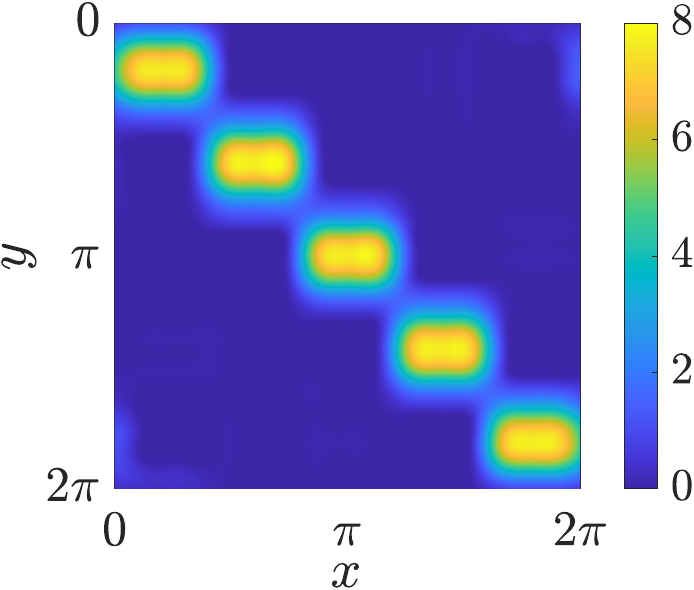}
    \end{minipage}
    \caption{(a)~Lemon-slice potential comprising five wells and one trajectory evolving within it according to the non-reversible system \eqref{eq:SDE}. (b)~Angular coordinate $ \omega $ of the training data and the resulting clustering into five sets. (c)~First ten singular values, illustrating that there is a clear spectral gap between the fifth and sixth singular value. (d)~Dominant five left singular functions, where \cdash{matlab1} denotes the first, \cdash{matlab2} the second, \cdash{matlab3} the third, \cdash{matlab4} the fourth, and \cdash{matlab5} the fifth singular function. (e)~Dominant five right singular functions. (f)~Resulting rank-5 reconstruction of the transition probability density. The small light-blue regions in the top right and bottom left corners correspond to transitions between adjacent clusters in the $ 2 \ts \pi $-periodic domain.}
    \label{fig:SDE data}
\end{figure*}

We now consider random walk data obtained from the non-reversible overdamped Langevin equation (see, e.g., \cite{LNP13}), defined by
\begin{equation} \label{eq:SDE}
    \mathrm{d}X_t = \big(\!-\!\nabla V(X_t) + M \ts \nabla V(X_t)\big) \ts \mathrm{d}t + \sqrt{2 \beta^{-1}} \ts \mathrm{d}W_t,
\end{equation}
where $ V \colon \mathbb{R}^2 \to \mathbb{R} $ is a potential, $ M \in \R^{2 \times 2} $ an antisymmetric matrix, and $ \beta > 0 $ the inverse temperature. We choose the two-dimensional potential
\begin{equation*}
     V(x) = \cos\big(k \, \arctan(x_2, x_1)\big) + 10 \left(\sqrt{x_1^2 + x_2^2} - 1\right)^2,
\end{equation*}
taken from \cite{BKKBDS18}, which comprises $ k $ wells that are uniformly distributed on the unit circle. In what follows, we set $ k = 5 $, $ M = \left[ \begin{smallmatrix} 0 & 1 \\ -1 & 0 \end{smallmatrix}\right] $, and $ \beta = 2 $. The potential, drift term, and resulting dynamics are illustrated in Figure~\ref{fig:SDE data}\ts(a). Since the slow dynamics, i.e., the transitions between the different wells of the system, predominantly depend on the angular coordinate $ \omega $ and not the radial coordinate $ r $, our data is taken only to be this angular coordinate in time. The angular trajectory data we use to estimate singular functions and the transition probabilities is shown in Figure~\ref{fig:SDE data}\ts(b), where the lag time is $ \tau = 0.1 $. The numerically computed singular values and the corresponding left and right singular functions are shown in Figures \ref{fig:SDE data}\ts(c), (d), and (e), respectively. There is a clear spectral gap between the fifth and sixth singular value, indicating the existence of five clusters corresponding to the five wells of the potential. Figure~\ref{fig:SDE data}\ts(f) shows a rank-5 reconstruction of the transition densities, illustrating how particles move from one well to the neighboring wells.

%%%%%%%%%%%%%%%%%%%%%%%%%%%%%%%%%%%%%%%%%%%%%%%%%%%%%%%%%%%%%%%%%%%%%%%%%%%%%%%%%%%%%%%%%%
\subsubsection{Nikkei index data}

As a final demonstration, we choose the Nikkei 225 index over the last five years, shown in Figure~\ref{fig:stock market data}\ts(a). After applying our data-driven method to this data we obtain the singular value spectrum presented in Figure~\ref{fig:stock market data}\ts(b), where a small spectral gap can be observed between the fourth and fifth singular values. Nevertheless, the clustering algorithm identifies meaningful boundaries, as can be observed in the rank-4 approximation of the transition densities in Figure~\ref{fig:stock market data}\ts(c). This approximation provides four distinct clusters that roughly correspond to the plateaus in the original data. The estimated transition density function now allows us to generate new random walk data, e.g., to predict the behavior in the future. The higher the rank of the approximation, the more accurate (in theory) the transition densities. However, the matrix approximations of the projected transfer operators may be ill-conditioned, which might lead to spurious eigenvalues and oscillatory eigenfunctions.

\begin{figure*}
    \centering
    \begin{minipage}[t]{0.30\linewidth}
        \centering
        \subfiguretitle{(a)}
        \includegraphics[height=3.5cm]{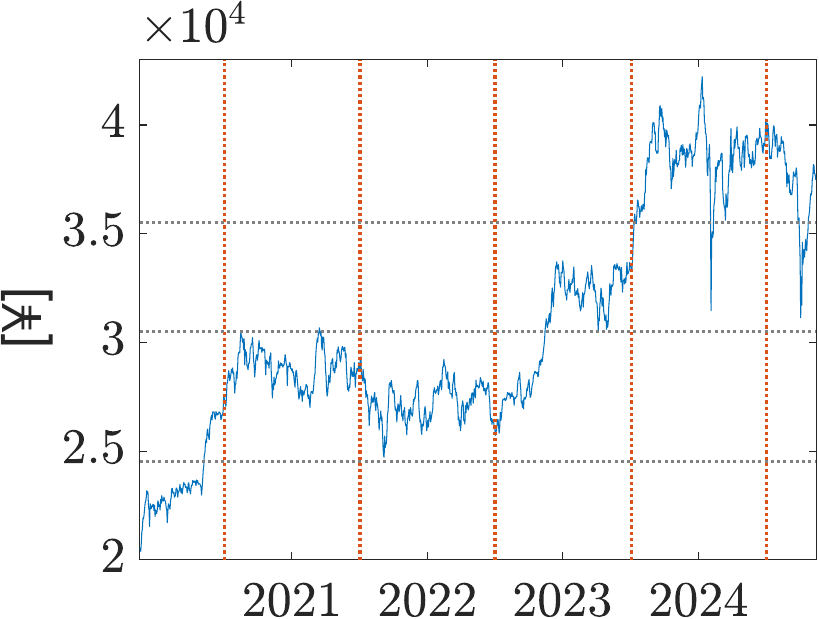}
    \end{minipage}
    \begin{minipage}[t]{0.30\linewidth}
        \centering
        \subfiguretitle{(b)}
        \vspace*{2.1ex}
        \includegraphics[height=3.44cm]{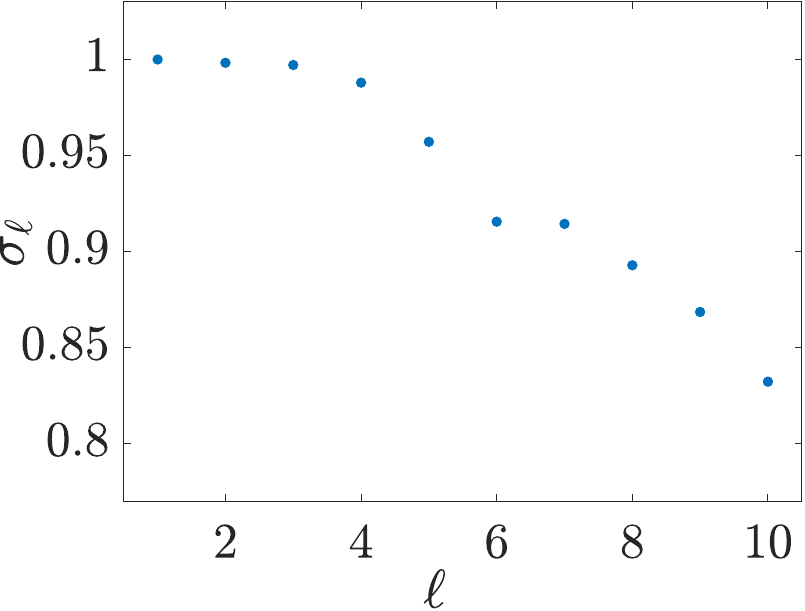}
    \end{minipage}
    \begin{minipage}[t]{0.28\linewidth}
        \centering
        \subfiguretitle{(c)}
        \vspace*{-0.1ex}
        \includegraphics[height=3.87cm]{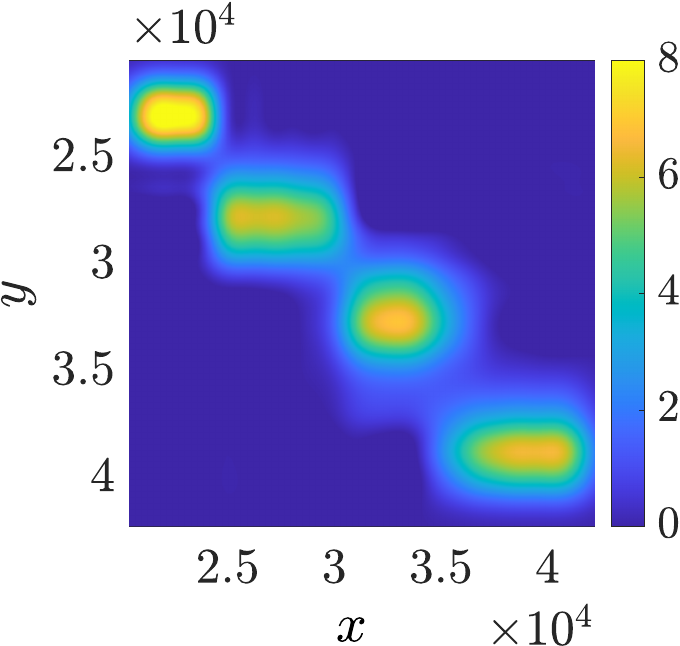}
    \end{minipage}
    \caption{(a)~Nikkei 225 index and the resulting clustering into four sets. (b)~Singular values. (c)~Rank-4 approximation of the transition density function.}
    \label{fig:stock market data}
\end{figure*}

%%%%%%%%%%%%%%%%%%%%%%%%%%%%%%%%%%%%%%%%%%%%%%%%%%%%%%%%%%%%%%%%%%%%%%%%%%%%%%%%%%%%%%%%%%
\section{Discussion}
\label{sec:Discussion}

The main goal of this work was to provide a data-driven method for processing stochastic signals. We have defined transfer operators associated with random walks on graphons and extended conventional spectral clustering techniques for undirected and directed graphs to symmetric and asymmetric graphons. We have furthermore shown that spectral decompositions of transfer operators allow us not only to identify clusters, but also to learn the underlying transition probability densities and, provided the random process is reversible, the graphon itself. The transfer operators can be either estimated from random walk data, using data-driven Galerkin projections, or, if the graphon is known, by directly computing the required integrals. Assuming there exists a spectral gap, i.e., there are only a few eigenvalues close to one, representing the slow timescales, our method successfully identifies clusters in symmetric and asymmetric graphons. We have demonstrated the efficacy, accuracy, and versatility of the proposed framework using synthetic and real-world data, ranging from simple benchmark problems to daily average temperatures and stock market data.

So far, prior work has mostly focused on symmetric graphons. Random walk processes associated with symmetric graphons are reversible and the spectra of the corresponding transfer operators are real-valued. Conventional spectral clustering methods that have been developed for undirected graphs \cite{Luxburg07} can be easily extended to symmetric graphons. The asymmetric graphon case, on the other hand, is not yet well understood. A random walk process might get trapped in absorbing sets and the eigenvalues of transfer operators are, in general, complex-valued. Considering the forward--backward process, which again results in self-adjoint operators, is one way to circumvent this issue. Our spectral clustering approach for asymmetric graphons is a generalization of the method proposed in \cite{KT24}. A possible next step could be to consider time-evolving graphons. The clusters then also change over time, meaning that they might emerge, vanish, split, or merge with other clusters. Tracking these changes is a challenging problem in even simple contrived examples, let alone real-world datasets. Transfer operator-based clustering methods for time-evolving graphs have recently been proposed in \cite{TDK25}. Possible extensions of these methods to time-evolving graphons will be considered in future work.

\section*{Acknowledgments}

We thank David Lloyd for interesting discussions about graphons and their applications. JJB was partially supported by an NSERC Discovery Grant and the Fondes de Recherche du Qu\'ebec -- Nature et Technologies (FRQNT).

%%%%%%%%%%%%%%%%%%%%%%%%%%%%%%%%%%%%%%%%%%%%%%%%%%%%%%%%%%%%%%%%%%%%%%%%%%%%%%%%%%%%%%%%%%
\appendix

%%%%%%%%%%%%%%%%%%%%%%%%%%%%%%%%%%%%%%%%%%%%%%%%%%%%%%%%%%%%%%%%%%%%%%%%%%%%%%%%%%%%%%%%%%
\section{Spectral decompositions of compact operators}
\label{app:spectral decompositions}

For the sake of completeness, we will briefly review spectral properties of compact operators. Detailed derivations and proofs can be found in~\cite{Aubin00, MSKS20}. In what follows, $ H $, $ H_1 $, and $ H_2 $ denote Hilbert spaces.

\begin{definition}[Rank-one operator]
Given nonzero elements $ r \in H_1 $ and $ s \in H_2 $, we define the bounded linear \emph{rank-one operator} $ s \otimes r \colon H_1 \to H_2 $ by $ (s \otimes r) f = \innerprod{r}{f} s $.
\end{definition}

Note that we adopt the physicists' convention and define the inner product to be linear in the second argument.

\begin{theorem}
Let $ \mathcal{A} \colon H \to H $ be a self-adjoint compact linear operator, then there exists an \emph{eigendecomposition}
\begin{equation*}
    \mathcal{A} = \sum_\ell \lambda_\ell (\varphi_\ell \otimes \varphi_\ell),
\end{equation*}
where $ \{ \varphi_\ell \}_{\ell} $ forms an orthonormal system of eigenfunctions corresponding to the nonzero eigenvalues $ \{ \lambda_\ell \}_\ell \subseteq \R $. If the set of eigenvalues is not finite, then the sequence of eigenvalues converges to zero.
\end{theorem}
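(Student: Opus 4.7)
My plan is to follow the classical route for the spectral theorem of compact self-adjoint operators: produce one eigenvalue by a variational argument, then peel it off and iterate on the orthogonal complement. I would begin by establishing real-valuedness of eigenvalues and orthogonality of eigenspaces. If $\mathcal{A}\varphi = \lambda\varphi$ with $\varphi \neq 0$, then self-adjointness together with the physicists' convention yields $\bar{\lambda}\innerprod{\varphi}{\varphi} = \innerprod{\mathcal{A}\varphi}{\varphi} = \innerprod{\varphi}{\mathcal{A}\varphi} = \lambda\innerprod{\varphi}{\varphi}$, so $\lambda \in \R$. The analogous manipulation applied to two eigenpairs forces $\innerprod{\varphi_i}{\varphi_j} = 0$ whenever $\lambda_i \neq \lambda_j$; within a single eigenspace (which is finite-dimensional because $\mathcal{A}$ is compact), Gram--Schmidt produces an orthonormal basis.

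The core step---and the one I expect to be the main obstacle---is to produce a single nonzero eigenpair. I would use the variational identity $\norm{\mathcal{A}} = \sup_{\norm{f}=1}\abs{\innerprod{f}{\mathcal{A}f}}$, which holds for any bounded self-adjoint operator via polarization. Choose a maximizing sequence $f_n$ with $\norm{f_n}=1$ and $\innerprod{f_n}{\mathcal{A}f_n} \to \lambda_1 \in \{\pm\norm{\mathcal{A}}\}$ (assuming $\mathcal{A}\not\equiv 0$, else the empty sum is trivially valid). Expanding $\norm{\mathcal{A}f_n - \lambda_1 f_n}^2 \le \norm{\mathcal{A}}^2 - 2\lambda_1\innerprod{f_n}{\mathcal{A}f_n} + \lambda_1^2$, the right-hand side tends to zero, so $\mathcal{A}f_n - \lambda_1 f_n \to 0$. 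By compactness of $\mathcal{A}$, a subsequence of $\mathcal{A}f_n$ converges in norm to some $w$, hence $\lambda_1 f_n \to w$, and setting $\varphi_1 = w/\lambda_1$ gives $\norm{\varphi_1}=1$ and $\mathcal{A}\varphi_1 = \lambda_1 \varphi_1$.

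Having produced one eigenpair, I would iterate. The orthogonal complement $V_1 = \{\varphi_1\}^\perp$ is $\mathcal{A}$-invariant by self-adjointness, and $\mathcal{A}|_{V_1}$ is again compact and self-adjoint, so the previous argument yields $(\lambda_2, \varphi_2)$ with $\abs{\lambda_2} \le \abs{\lambda_1}$ and $\varphi_2 \perp \varphi_1$. Continuing inductively produces a (possibly finite) orthonormal system $\{\varphi_\ell\}$ and reals $\abs{\lambda_1} \ge \abs{\lambda_2} \ge \cdots > 0$. If the construction terminates at stage $N$, then $\mathcal{A}$ vanishes on $\mspan\{\varphi_1,\dots,\varphi_N\}^\perp$ and the finite eigendecomposition follows. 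Otherwise, to show $\lambda_\ell \to 0$, assume for contradiction that $\abs{\lambda_\ell} \ge c > 0$ for all $\ell$; then orthonormality gives $\norm{\mathcal{A}\varphi_\ell - \mathcal{A}\varphi_k}^2 = \lambda_\ell^2 + \lambda_k^2 \ge 2c^2$ for $\ell \neq k$, so $\{\mathcal{A}\varphi_\ell\}$ has no convergent subsequence---contradicting compactness applied to the bounded sequence $\{\varphi_\ell\}$.

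Finally, to assemble the series decomposition, I would set $P_N = \sum_{\ell=1}^N \lambda_\ell(\varphi_\ell \otimes \varphi_\ell)$ and observe that on the invariant subspace $\mspan\{\varphi_1,\dots,\varphi_N\}^\perp$ the residual operator $\mathcal{A} - P_N$ coincides with the compact self-adjoint operator produced at stage $N+1$ of the construction, whose norm equals $\abs{\lambda_{N+1}}$. Since $\lambda_\ell \to 0$, we obtain $\norm{\mathcal{A} - P_N} \to 0$, yielding convergence of the series in operator norm. On the kernel $\ker(\mathcal{A}) = \overline{\mspan\{\varphi_\ell\}}^\perp$ both sides vanish, completing the proof.
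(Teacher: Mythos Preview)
Your proof is correct and follows the classical variational route to the spectral theorem for compact self-adjoint operators. The paper, however, does not prove this statement at all: it is placed in Appendix~\ref{app:spectral decompositions} as a background result, with the remark that ``detailed derivations and proofs can be found in~\cite{Aubin00, Werner11, MSKS20}.'' So there is nothing to compare against beyond noting that you have supplied a full argument where the paper simply cites standard references. Your treatment---extracting an eigenpair via $\norm{\mathcal{A}} = \sup_{\norm{f}=1}\abs{\innerprod{f}{\mathcal{A}f}}$ and compactness, peeling off on the orthogonal complement, and bounding the residual by $\abs{\lambda_{N+1}}$---is exactly the proof one finds in the cited texts (e.g., Werner), so in that sense your approach coincides with what the paper defers to. One minor redundancy: once you have established $\norm{\mathcal{A}-P_N}\to 0$ in operator norm, the closing sentence about $\ker(\mathcal{A})$ is not needed for the theorem as stated.
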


We assume the eigenvalues $ \lambda_\ell $ to be sorted in non-increasing order, i.e., $ \lambda_1 \ge \lambda_2 \ge \lambda_3 \ge \dots $. Similarly, we can also compute a singular value decomposition of a compact linear operator.

\begin{theorem}
Let $ \mathcal{A} \colon H_1 \to H_2 $ be a compact linear operator, then there exists a \emph{singular value decomposition}
\begin{equation*}
    \mathcal{A} = \sum_\ell \sigma_\ell (u_\ell \otimes v_\ell),
\end{equation*}
where the left singular functions $ \{ u_\ell \}_\ell \subset H_2 $ and right singular functions $ \{ v_\ell \}_\ell \subset H_1 $ form orthonormal systems associated with the nonzero singular values $ \{ \sigma_\ell \}_\ell \subseteq \R_{> 0} $. If the set of singular values is not finite, then the sequence of singular values converges to zero.
\end{theorem}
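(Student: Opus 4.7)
The plan is to reduce the singular value decomposition of a compact operator to the already-established eigendecomposition for self-adjoint compact operators by applying that result to $ \mathcal{A}^* \mathcal{A} $. First, I would note that $ \mathcal{A}^* \mathcal{A} \colon H_1 \to H_1 $ is self-adjoint, positive semi-definite, and compact (since $ \mathcal{A}^* $ is compact whenever $ \mathcal{A} $ is, and the composition of a compact and a bounded operator is compact). The preceding theorem then yields an orthonormal family $ \{ v_\ell \} \subset H_1 $ of eigenfunctions with strictly positive eigenvalues $ \mu_\ell > 0 $ satisfying $ \mathcal{A}^* \mathcal{A} v_\ell = \mu_\ell \ts v_\ell $, accumulating only at $ 0 $ in the infinite case.

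Next, I would define the singular values $ \sigma_\ell := \sqrt{\mu_\ell} $ and the candidate left singular functions $ u_\ell := \sigma_\ell^{-1} \mathcal{A} v_\ell $. A short computation using the adjoint relation gives
\begin{equation*}
    \innerprod{u_\ell}{u_k} = \sigma_\ell^{-1} \sigma_k^{-1} \innerprod{\mathcal{A} v_\ell}{\mathcal{A} v_k} = \sigma_\ell^{-1} \sigma_k^{-1} \innerprod{v_\ell}{\mathcal{A}^* \mathcal{A} v_k} = \sigma_\ell^{-1} \sigma_k^{-1} \ts \mu_k \ts \delta_{\ell k} = \delta_{\ell k},
\end{equation*}
so $ \{ u_\ell \} \subset H_2 $ is orthonormal. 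The asserted convergence $ \sigma_\ell \to 0 $ in the infinite case is inherited directly from $ \mu_\ell \to 0 $ via $ \sigma_\ell = \sqrt{\mu_\ell} $.

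The decisive step is to verify the representation $ \mathcal{A} = \sum_\ell \sigma_\ell (u_\ell \otimes v_\ell) $ on all of $ H_1 $. The key lemma is the kernel identity $ \ker(\mathcal{A}^* \mathcal{A}) = \ker(\mathcal{A}) $: the forward inclusion follows from $ \norm{\mathcal{A} f}^2 = \innerprod{f}{\mathcal{A}^* \mathcal{A} f} $, and the reverse inclusion is immediate. The spectral theorem supplies the orthogonal decomposition $ H_1 = \overline{\mspan}\{v_\ell\} \oplus \ker(\mathcal{A}^* \mathcal{A}) $, which by the kernel identity becomes $ H_1 = \overline{\mspan}\{v_\ell\} \oplus \ker(\mathcal{A}) $. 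Expanding an arbitrary $ f \in H_1 $ as $ f = \sum_\ell \innerprod{v_\ell}{f} v_\ell + f_\perp $ with $ f_\perp \in \ker(\mathcal{A}) $, and using continuity of $ \mathcal{A} $ to commute with the partial-sum limit, gives
\begin{equation*}
    \mathcal{A} f = \sum_\ell \innerprod{v_\ell}{f} \ts \mathcal{A} v_\ell = \sum_\ell \sigma_\ell \innerprod{v_\ell}{f} u_\ell = \bigg( \sum_\ell \sigma_\ell \ts (u_\ell \otimes v_\ell) \bigg) f,
\end{equation*}
which is the claimed SVD.

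The main obstacle I anticipate is not any single calculation but stitching together the kernel identification with the orthogonal decomposition supplied by the spectral theorem in order to conclude that the component of $ f $ orthogonal to $ \overline{\mspan}\{v_\ell\} $ contributes nothing to $ \mathcal{A} f $. Everything else — compactness, self-adjointness, and positivity of $ \mathcal{A}^* \mathcal{A} $, orthonormality of $ \{ u_\ell \} $, and the accumulation of singular values at zero — is a direct and routine consequence of the cited eigendecomposition theorem together with continuity of $ \mathcal{A} $.
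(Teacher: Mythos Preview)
Your proof is correct and follows the standard route. The paper itself does not prove this theorem; it states it as a classical result and defers to the references \cite{Aubin00, Werner11, MSKS20} for the derivation. The construction you give --- form $\mathcal{A}^*\mathcal{A}$, apply the self-adjoint spectral theorem to obtain orthonormal eigenfunctions $v_\ell$ with positive eigenvalues $\mu_\ell$, set $\sigma_\ell = \mu_\ell^{1/2}$ and $u_\ell = \sigma_\ell^{-1}\mathcal{A} v_\ell$ --- is precisely the content of the paper's subsequent Lemma~\ref{lem:SVD}, which records this relationship without proof. So your argument is not merely compatible with the paper's presentation but effectively supplies the details the paper omits.
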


We also assume the singular values to be sorted in non-increasing order, i.e., $ \sigma_1 \ge \sigma_2 \ge \sigma_3 \ge \dots $. Eigenfunctions and singular functions are closely related as the following lemma shows.

\begin{lemma} \label{lem:SVD}
Given a compact linear operator $ \mathcal{A} \colon H_1 \to H_2 $, it holds that $ \mathcal{A}^* \mathcal{A} \colon H_1 \to H_1 $ is a positive operator.\!\footnote{An operator $ \mathcal{A} $ is called positive if $ \mathcal{A} = \mathcal{A}^* $ and $ \innerprod{\mathcal{A} f}{f} \ge 0 $ for all $ f $.} Let $ \{ \varphi_\ell \}_\ell $ be the orthonormal system of eigenfunctions and $ \{ \lambda_\ell \}_\ell \subseteq \R_{>0} $ the set of associated nonzero eigenvalues of $ \mathcal{A}^* \mathcal{A} $. Then the singular value decomposition of $ \mathcal{A} $ is defined by the singular values $ \sigma_\ell = \lambda_\ell^{\frac{1}{2}} $ and the singular functions $ u_\ell = \lambda_\ell^{-\frac{1}{2}} \mathcal{A} \varphi_\ell $ and $ v_\ell = \varphi_\ell $.
\end{lemma}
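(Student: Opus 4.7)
The plan is to proceed in three phases: first, verify that $\mathcal{A}^*\mathcal{A}$ satisfies the hypotheses of the spectral theorem for compact self-adjoint operators stated just above; second, use that theorem to obtain the orthonormal eigensystem $\{(\lambda_\ell, \varphi_\ell)\}$ and define the candidate triples $(\sigma_\ell, u_\ell, v_\ell)$; third, verify that these candidates indeed reconstruct $\mathcal{A}$ in the form required by the SVD theorem.

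For the first phase, self-adjointness of $\mathcal{A}^*\mathcal{A}$ follows from $(\mathcal{A}^*\mathcal{A})^* = \mathcal{A}^*\mathcal{A}^{**} = \mathcal{A}^*\mathcal{A}$, and non-negativity from $\innerprod{\mathcal{A}^*\mathcal{A} f}{f} = \innerprod{\mathcal{A}f}{\mathcal{A}f} = \norm{\mathcal{A}f}^2 \ge 0$, which shows $\mathcal{A}^*\mathcal{A}$ is positive and hence that all its eigenvalues are non-negative (so the nonzero ones listed in the hypothesis are strictly positive, making $\sigma_\ell := \lambda_\ell^{1/2}$ well-defined). Compactness of $\mathcal{A}^*\mathcal{A}$ follows from the ideal property: the composition of the compact operator $\mathcal{A}$ with the bounded operator $\mathcal{A}^*$ is compact. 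The spectral theorem cited above then yields the orthonormal eigensystem $\{\varphi_\ell\}_\ell$ in $H_1$.

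For the second phase, set $v_\ell := \varphi_\ell$ and $u_\ell := \sigma_\ell^{-1}\mathcal{A}\varphi_\ell$. Orthonormality of the $u_\ell$ in $H_2$ is a direct computation using the adjoint relation:
\begin{equation*}
    \innerprod{u_k}{u_\ell}_{H_2} = \frac{1}{\sigma_k\sigma_\ell} \innerprod{\mathcal{A}\varphi_k}{\mathcal{A}\varphi_\ell}_{H_2} = \frac{1}{\sigma_k\sigma_\ell} \innerprod{\varphi_k}{\mathcal{A}^*\mathcal{A}\varphi_\ell}_{H_1} = \frac{\lambda_\ell}{\sigma_k\sigma_\ell}\ts\delta_{k\ell} = \delta_{k\ell}.
\end{equation*}

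For the third phase, I would establish the identity $\mathcal{A} = \sum_\ell \sigma_\ell (u_\ell \otimes v_\ell)$ by decomposing an arbitrary $f \in H_1$ as $f = \sum_\ell \innerprod{\varphi_\ell}{f}\ts\varphi_\ell + f^\perp$, where $f^\perp$ lies in the orthogonal complement of $\overline{\mspan}\{\varphi_\ell\}$. Applying $\mathcal{A}$ termwise and using $\mathcal{A}\varphi_\ell = \sigma_\ell u_\ell$ on the spanning part yields $\sum_\ell \sigma_\ell \innerprod{\varphi_\ell}{f}\ts u_\ell = \sum_\ell \sigma_\ell (u_\ell \otimes v_\ell)f$, which matches the stated rank-one-operator convention $(s \otimes r)f = \innerprod{r}{f}\ts s$.

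The only nontrivial step — and thus the main obstacle to watch for — is handling $f^\perp$: the spectral theorem as stated records only the nonzero eigenvalues of $\mathcal{A}^*\mathcal{A}$, so the orthogonal complement of $\overline{\mspan}\{\varphi_\ell\}$ is precisely $\ker(\mathcal{A}^*\mathcal{A})$, and one must show that $\mathcal{A}$ annihilates this subspace. This follows from the positivity identity already used above: for $f^\perp \in \ker(\mathcal{A}^*\mathcal{A})$ we have $0 = \innerprod{\mathcal{A}^*\mathcal{A} f^\perp}{f^\perp} = \norm{\mathcal{A} f^\perp}^2$, hence $\mathcal{A} f^\perp = 0$. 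This confirms that the contribution from $f^\perp$ vanishes in both sides of the claimed identity, completing the proof.
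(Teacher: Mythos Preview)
Your proof is correct and follows the standard textbook argument. The paper itself does not prove this lemma: Appendix~\ref{app:spectral decompositions} states the result and defers all derivations to the references \cite{Aubin00, Werner11, MSKS20}, so there is nothing to compare against beyond noting that your three-phase outline (positivity and compactness of $\mathcal{A}^*\mathcal{A}$, orthonormality of the $u_\ell$ via the adjoint relation, and vanishing of $\mathcal{A}$ on $\ker(\mathcal{A}^*\mathcal{A})$) is exactly the route those references take.
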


%%%%%%%%%%%%%%%%%%%%%%%%%%%%%%%%%%%%%%%%%%%%%%%%%%%%%%%%%%%%%%%%%%%%%%%%%%%%%%%%%%%%%%%%%%
\section{Galerkin approximation of the Koopman operator}
\label{app:Galerkin approximation}

Here we provide more details on the Galerkin approximation of the Koopman operator $ \mathcal{K} \colon H_1 \to H_2 $ and its adjoint. In the symmetric graphon case we will have $ H_1 = H_2 = L_\pi^2 $, while in the asymmetric graphon case we will have $ H_1 = L_\mu^2 $ and $ H_2 = L_\nu^2 $. Given a set of $ n $ linearly independent basis functions $ \{ \phi_i \}_{i=1}^n \subset H_1 \cap H_2 $ spanning the $ n $-dimensional subspace $ \mathbb{V} = \mspan\{ \phi_i \}_{i=1}^n $, we define the vector-valued function
\begin{equation*}
    \phi(x) = [\phi_1(x), \dots, \phi_n(x)]^\top \in \R^n.
\end{equation*}
Any function $ f \in \mathbb{V} $ can hence be written as
\begin{equation*}
    f(x) = \sum_{i=1}^n \alpha_i \ts \phi_i(x) = \alpha^\top \phi(x),
\end{equation*}
with $ \alpha = [\alpha_1, \dots, \alpha_n]^\top \in \R^n $. In order to compute the Galerkin approximation $ \mathcal{K}_\phi \colon \mathbb{V} \to \mathbb{V} $ of the Koopman operator $ \mathcal{K} $ with respect to the inner product $ \innerprod{\cdot}{\cdot}_{H_1} $, we have to construct the two matrices $ C_{xx}, C_{xy} \in \R^{n \times n} $, with
\begin{equation*}
    \big[C_{xx}\big]_{ij} = \innerprod{\phi_i}{\phi_j}_{H_1}
    \quad \text{and} \quad
    \big[C_{xy}\big]_{ij} = \innerprod{\phi_i}{\mathcal{K} \phi_j}_{H_1}.
\end{equation*}
The matrix representation $ K \in \R^{n \times n} $ of the projected operator $ \mathcal{K}_\phi $ is then given by $ K = C_{xx}^{-1} C_{xy} $ so that
\begin{equation*}
    \mathcal{K}_\phi \ts f(x) = (K \ts \alpha)^\top \phi(x).
\end{equation*}
Assume now that $ K \ts \xi_\ell = \lambda_\ell \ts \xi_\ell $, then, defining $ \varphi_\ell(x) = \xi_\ell^\top \phi(x) $, we have
\begin{equation*}
    \mathcal{K}_\phi \ts \varphi_\ell(x) = (K \ts \xi_\ell)^\top \phi(x) = \lambda_\ell \ts \xi_\ell^\top \phi(x) = \lambda_\ell \ts \varphi_\ell(x).
\end{equation*}
That is, we can compute approximations of eigenvalues and eigenfunctions of the operator $ \mathcal{K} $ by computing eigenvalues and eigenvectors of the matrix $ K $.

\begin{lemma} \label{lem:Galerkin adjoint}
The matrix representation of the adjoint $ \mathcal{K}_\phi^* $ of $ \mathcal{K}_\phi $ is given by $ K^* = C_{yy}^{-1} C_{yx} $, where $ \big[C_{yy}\big]_{ij} = \innerprod{\phi_i}{\phi_j}_{H_2} $ and $ \big[C_{yx}\big]_{ij} = \big[C_{xy}\big]_{ji} $.
\end{lemma}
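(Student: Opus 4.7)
The plan is to verify the formula via a short matrix computation, starting from the adjoint pairing that defines $\mathcal{K}_\phi^*$ and then substituting the already-established expression $K = C_{xx}^{-1} C_{xy}$ for the matrix of the projected Koopman operator.

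First I would make explicit what is meant by the adjoint. Because $\mathcal{K}$ maps between two (possibly different) weighted $L^2$ spaces, the natural finite-dimensional adjoint $\mathcal{K}_\phi^*$ on $\mathbb{V}$ is defined through the mixed identity
\begin{equation*}
    \innerprod{\mathcal{K}_\phi f}{g}_{H_1} = \innerprod{f}{\mathcal{K}_\phi^* g}_{H_2} \qquad \text{for all } f, g \in \mathbb{V},
\end{equation*}
which specializes in the symmetric case $H_1 = H_2 = L_\pi^2$ to the usual adjoint relation but in general pairs the two weighted inner products on either side of $\mathcal{K}_\phi$. Let $K^* \in \R^{n \times n}$ denote the unknown matrix of $\mathcal{K}_\phi^*$, so that $\mathcal{K}_\phi^* g = (K^* \gamma)^\top \phi$ whenever $g = \gamma^\top \phi$.

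Next I would expand both sides in coordinates: writing $f = \alpha^\top \phi$ and $g = \gamma^\top \phi$, the left-hand side equals $(K\alpha)^\top C_{xx}\gamma = \alpha^\top K^\top C_{xx}\gamma$, while the right-hand side equals $\alpha^\top C_{yy} K^* \gamma$. Since these must agree for every $\alpha, \gamma \in \R^n$, I would read off the matrix identity $K^\top C_{xx} = C_{yy} K^*$ and solve for $K^* = C_{yy}^{-1} K^\top C_{xx}$. Substituting $K = C_{xx}^{-1} C_{xy}$ and using that the Gram matrix $C_{xx}$ is symmetric gives $K^\top C_{xx} = C_{xy}^\top C_{xx}^{-1} C_{xx} = C_{xy}^\top$, and since the definition $[C_{yx}]_{ij} = [C_{xy}]_{ji}$ is precisely $C_{yx} = C_{xy}^\top$, one concludes $K^* = C_{yy}^{-1} C_{yx}$.

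The only subtlety—and the main place one can get turned around—is choosing the correct pairing of inner products in the defining identity of the adjoint. In the asymmetric case $H_1 \neq H_2$, so $\mathcal{K}_\phi$ is most naturally viewed as an operator between two copies of $\mathbb{V}$ equipped with different weighted inner products, and this mismatch is exactly what forces $C_{yy}^{-1}$ rather than $C_{xx}^{-1}$ to appear in the formula for $K^*$, even though $K$ itself is assembled from $C_{xx}$ and $C_{xy}$.
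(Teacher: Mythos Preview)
Your proposal is correct and follows essentially the same route as the paper: both verify the adjoint identity $\innerprod{\mathcal{K}_\phi f}{g}_{H_1} = \innerprod{f}{\mathcal{K}_\phi^* g}_{H_2}$ by expanding in coordinates, using $\innerprod{f}{g}_{H_1} = \alpha^\top C_{xx}\gamma$ and $\innerprod{f}{g}_{H_2} = \alpha^\top C_{yy}\gamma$, and then reducing $(K\alpha)^\top C_{xx} = \alpha^\top C_{xy}^\top$ via the symmetry of $C_{xx}$. The paper compresses this into a single chain of equalities, while you write out the intermediate matrix equation $K^\top C_{xx} = C_{yy} K^*$ explicitly, but the argument is the same.
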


\begin{proof}
Given two functions $ f(x) = \alpha^\top \phi(x) $ and $ g(x) = \beta^\top \phi(x) $, we have $ \innerprod{f}{g}_{H_1} = \alpha^\top C_{xx} \ts \beta $ and $ \innerprod{f}{g}_{H_2} = \alpha^\top C_{yy} \ts \beta $. It follows that
\begin{align*}
    \innerprod{\mathcal{K}_\phi \ts f}{g}_{H_1} &= (K \ts \alpha)^\top C_{xx} \ts \beta = \alpha^\top C_{xy}^\top \ts \beta \\ &= \alpha^\top C_{yy} \ts (K^* \beta) = \innerprod{f}{\mathcal{K}_\phi^* \ts g}_{H_2}. \qedhere
\end{align*}
\end{proof}

Note that this is indeed the Galerkin approximation of the adjoint $ \mathcal{K}^* $ since $ \big[C_{yx}\big]_{ij} = \innerprod{\mathcal{K} \phi_i}{\phi_j}_{H_1} = \innerprod{\phi_i}{\mathcal{K}^* \phi_j}_{H_2} $.

%%%%%%%%%%%%%%%%%%%%%%%%%%%%%%%%%%%%%%%%%%%%%%%%%%%%%%%%%%%%%%%%%%%%%%%%%%%%%%%%%%%%%%%%%%
%%% BIBLIOGRAPHY %%%%
%%%%%%%%%%%%%%%%%%%%%%%%%%%%%%%%%%%%%%%%%%%%%%%%%%%%%%%%%%%%%%%%%%%%%%%%%%%%%%%%%%%%%%%%%%

\bibliographystyle{IEEEtran}
\bibliography{Graphons}

% Generated by IEEEtran.bst, version: 1.14 (2015/08/26)
\begin{thebibliography}{10}
\providecommand{\url}[1]{#1}
\csname url@samestyle\endcsname
\providecommand{\newblock}{\relax}
\providecommand{\bibinfo}[2]{#2}
\providecommand{\BIBentrySTDinterwordspacing}{\spaceskip=0pt\relax}
\providecommand{\BIBentryALTinterwordstretchfactor}{4}
\providecommand{\BIBentryALTinterwordspacing}{\spaceskip=\fontdimen2\font plus
\BIBentryALTinterwordstretchfactor\fontdimen3\font minus
  \fontdimen4\font\relax}
\providecommand{\BIBforeignlanguage}[2]{{%
\expandafter\ifx\csname l@#1\endcsname\relax
\typeout{** WARNING: IEEEtran.bst: No hyphenation pattern has been}%
\typeout{** loaded for the language `#1'. Using the pattern for}%
\typeout{** the default language instead.}%
\else
\language=\csname l@#1\endcsname
\fi
#2}}
\providecommand{\BIBdecl}{\relax}
\BIBdecl

\bibitem{KT24}
S.~Klus and M.~Trower, ``Transfer operators on graphs: {S}pectral clustering
  and beyond,'' \emph{Journal of Physics: Complexity}, vol.~5, no.~1, p.
  015014, 2024.

\bibitem{KD24}
S.~Klus and N.~D. Conrad, ``Dynamical systems and complex networks: A {K}oopman
  operator perspective,'' \emph{Journal of Physics: Complexity}, vol.~5, no.~4,
  p. 041001, 2024.

\bibitem{LS06}
L.~Lovász and B.~Szegedy, ``Limits of dense graph sequences,'' \emph{Journal
  of Combinatorial Theory, Series B}, vol.~96, no.~6, pp. 933--957, 2006.

\bibitem{lovasz2012large}
L.~Lov{\'a}sz, \emph{Large networks and graph limits}.\hskip 1em plus 0.5em
  minus 0.4em\relax American Mathematical Society, 2012, vol.~60.

\bibitem{janson2013graphons}
S.~Janson, \emph{Graphons, cut norm and distance, couplings and
  rearrangements}, ser. New York Journal of Mathematics.\hskip 1em plus 0.5em
  minus 0.4em\relax State University of New York, University at Albany, Albany,
  NY, 2013, vol.~4.

\bibitem{PLC21}
J.~Petit, R.~Lambiotte, and T.~Carletti, ``Random walks on dense graphs and
  graphons,'' \emph{SIAM Journal on Applied Mathematics}, vol.~81, no.~6, pp.
  2323--2345, 2021.

\bibitem{APSS20}
M.~Avella-Medina, F.~Parise, M.~T. Schaub, and S.~Segarra, ``Centrality
  measures for graphons: Accounting for uncertainty in networks,'' \emph{IEEE
  Transactions on Network Science and Engineering}, vol.~7, no.~1, pp.
  520--537, 2020.

\bibitem{morency2021graphon}
M.~W. Morency and G.~Leus, ``Graphon filters: {G}raph signal processing in the
  limit,'' \emph{IEEE Transactions on Signal Processing}, vol.~69, pp.
  1740--1754, 2021.

\bibitem{RCR21}
L.~Ruiz, L.~F.~O. Chamon, and A.~Ribeiro, ``Graphon signal processing,''
  \emph{IEEE Transactions on Signal Processing}, vol.~69, pp. 4961--4976, 2021.

\bibitem{levie2023graphon}
R.~Levie, ``A graphon-signal analysis of graph neural networks,''
  \emph{Advances in Neural Information Processing Systems}, vol.~36, pp.
  64\,482--64\,525, 2023.

\bibitem{ruiz2020graphon}
L.~Ruiz, L.~Chamon, and A.~Ribeiro, ``Graphon neural networks and the
  transferability of graph neural networks,'' \emph{Advances in Neural
  Information Processing Systems}, vol.~33, pp. 1702--1712, 2020.

\bibitem{keriven2020convergence}
R.~Levie, ``Convergence and stability of graph convolutional networks on large
  random graphs,'' \emph{Advances in Neural Information Processing Systems},
  vol.~33, pp. 21\,512--21\,523, 2020.

\bibitem{neuman2023transferability}
A.~M. Neuman and J.~J. Bramburger, ``Transferability of graph neural networks
  using graphon and sampling theories,'' 2023.

\bibitem{borgs2008convergent}
C.~Borgs, J.~Chayes, L.~Lov{\'a}sz, V.~S{\'o}s, and K.~Vesztergombi,
  ``Convergent sequences of dense graphs {I}: {S}ubgraph frequencies, metric
  properties and testing,'' \emph{Advances in Mathematics}, vol. 219, no.~6,
  pp. 1801--1851, 2008.

\bibitem{parise2023graphon}
F.~Parise and A.~Ozdaglar, ``Graphon games: {A} statistical framework for
  network games and interventions,'' \emph{Econometrica}, vol.~91, no.~1, pp.
  191--225, 2023.

\bibitem{gao2019graphon}
S.~Gao and P.~E. Caines, ``Graphon control of large-scale networks of linear
  systems,'' \emph{IEEE Transactions on Automatic Control}, vol.~65, no.~10,
  pp. 4090--4105, 2019.

\bibitem{XMW23}
X.~Xia, G.~Mishne, and Y.~Wang, ``Implicit graphon neural representation,'' in
  \emph{International Conference on Artificial Intelligence and
  Statistics}.\hskip 1em plus 0.5em minus 0.4em\relax PMLR, 2023, pp.
  10\,619--10\,634.

\bibitem{LaMa94}
A.~Lasota and M.~C. Mackey, \emph{Chaos, fractals, and noise: Stochastic
  aspects of dynamics}, 2nd~ed., ser. Applied Mathematical Sciences.\hskip 1em
  plus 0.5em minus 0.4em\relax New York: Springer, 1994, vol.~97.

\bibitem{DJ99}
M.~Dellnitz and O.~Junge, ``On the approximation of complicated dynamical
  behavior,'' \emph{SIAM Journal on Numerical Analysis}, vol.~36, no.~2, pp.
  491--515, 1999.

\bibitem{Mezic05}
I.~Mezi{\'{c}}, ``Spectral properties of dynamical systems, model reduction and
  decompositions,'' \emph{Nonlinear Dynamics}, vol.~41, no.~1, pp. 309--325,
  2005.

\bibitem{Davies82a}
E.~B. Davies, ``Metastable states of symmetric {M}arkov semigroups {I},''
  \emph{Proceedings of the London Mathematical Society}, vol. s3-45, no.~1, pp.
  133--150, 1982.

\bibitem{SS13}
C.~Sch\"utte and M.~Sarich, \emph{Metastability and Markov State Models in
  Molecular Dynamics: Modeling, Analysis, Algorithmic Approaches}, ser. Courant
  Lecture Notes.\hskip 1em plus 0.5em minus 0.4em\relax American Mathematical
  Society, 2013, no.~24.

\bibitem{FrSaMo10}
G.~Froyland, N.~Santitissadeekorn, and A.~Monahan, ``Transport in
  time-dependent dynamical systems: Finite-time coherent sets,'' \emph{Chaos:
  An Interdisciplinary Journal of Nonlinear Science}, vol.~20, no.~4, p.
  043116, 2010.

\bibitem{Froyland13}
G.~Froyland, ``An analytic framework for identifying finite-time coherent sets
  in time-dependent dynamical systems,'' \emph{Physica D: Nonlinear Phenomena},
  vol. 250, pp. 1--19, 2013.

\bibitem{CA14}
S.~Chan and E.~Airoldi, ``A consistent histogram estimator for exchangeable
  graph models,'' in \emph{International Conference on Machine Learning}.\hskip
  1em plus 0.5em minus 0.4em\relax PMLR, 2014, pp. 208--216.

\bibitem{WKR15}
M.~O. Williams, I.~G. Kevrekidis, and C.~W. Rowley, ``A data-driven
  approximation of the {K}oopman operator: Extending dynamic mode
  decomposition,'' \emph{Journal of Nonlinear Science}, vol.~25, no.~6, pp.
  1307--1346, 2015.

\bibitem{KKS16}
S.~Klus, P.~Koltai, and C.~Sch{\"u}tte, ``On the numerical approximation of the
  {P}erron--{F}robenius and {K}oopman operator,'' \emph{Journal of
  Computational Dynamics}, vol.~3, no.~1, pp. 51--79, 2016.

\bibitem{Lov93}
L.~Lov\'asz, ``Random walks on graphs: {A} survey,'' \emph{Combinatorics, Paul
  Erdős is Eighty}, vol.~2, 1993.

\bibitem{BPS22}
B.~Bonnet, N.~P. Duteil, and M.~Sigalotti, ``Consensus formation in first-order
  graphon models with time-varying topologies,'' \emph{Mathematical Models and
  Methods in Applied Sciences}, vol.~32, no.~11, pp. 2121--2188, 2022.

\bibitem{Bishop06}
C.~M. Bishop, \emph{Pattern Recognition and Machine Learning}.\hskip 1em plus
  0.5em minus 0.4em\relax New York: Springer, 2006.

\bibitem{KWNS18}
P.~Koltai, H.~Wu, F.~No\'e, and C.~Sch\"utte, ``Optimal data-driven estimation
  of generalized {M}arkov state models for non-equilibrium dynamics,''
  \emph{Computation}, vol.~6, no.~1, 2018.

\bibitem{AR16}
S.~Athreya and A.~R{\"o}llin, ``Dense graph limits under respondent-driven
  sampling,'' \emph{The Annals of Applied Probability}, vol.~26, no.~4, pp.
  2193--2210, 2016.

\bibitem{NoNu13}
F.~No{\'e} and F.~N{\"u}ske, ``A variational approach to modeling slow
  processes in stochastic dynamical systems,'' \emph{Multiscale Modeling \&
  Simulation}, vol.~11, no.~2, pp. 635--655, 2013.

\bibitem{Ando62}
T.~Ando, ``On compactness of integral operators,'' \emph{Indagationes
  Mathematicae}, vol.~24, no.~2, pp. 235--239, 1962.

\bibitem{Graham79}
I.~G. Graham and I.~H. Sloan, ``On the compactness of certain integral
  operators,'' \emph{Journal of Mathematical Analysis and Applications},
  vol.~68, no.~2, pp. 580--594, 1979.

\bibitem{NC15}
F.~No{\'e} and C.~Clementi, ``Kinetic distance and kinetic maps from molecular
  dynamics simulation,'' \emph{Journal of Chemical Theory and Computation},
  vol.~11, no.~10, pp. 5002--5011, 2015.

\bibitem{bramburger2024persistence}
J.~Bramburger, M.~Holzer, and J.~Williams, ``Persistence of steady-states for
  dynamical systems on large networks,'' 2024.

\bibitem{KoMe18}
M.~Korda and I.~Mezi{\'{c}}, ``On convergence of extended dynamic mode
  decomposition to the {K}oopman operator,'' \emph{Journal of Nonlinear
  Science}, vol.~28, no.~2, pp. 687--710, 2018.

\bibitem{bramburger2024auxiliary}
J.~J. Bramburger and G.~Fantuzzi, ``Auxiliary functions as {K}oopman
  observables: {D}ata-driven analysis of dynamical systems via polynomial
  optimization,'' \emph{Journal of Nonlinear Science}, vol.~34, no.~1, p.~8,
  2024.

\bibitem{LDBK17}
Q.~Li, F.~Dietrich, E.~M. Bollt, and I.~G. Kevrekidis, ``Extended dynamic mode
  decomposition with dictionary learning: A data-driven adaptive spectral
  decomposition of the {K}oopman operator,'' \emph{Chaos: An Interdisciplinary
  Journal of Nonlinear Science}, vol.~27, no.~10, p. 103111, 2017.

\bibitem{MPWN18}
A.~Mardt, L.~Pasquali, H.~Wu, and F.~No{\'e}, ``{VAMPnets} for deep learning of
  molecular kinetics,'' \emph{Nature Communications}, vol.~9, 2018.

\bibitem{TLK25}
M.~Tabish, B.~Leimkuhler, and S.~Klus, ``How deep is your network? {D}eep vs.\
  shallow learning of transfer operators,'' 2025.

\bibitem{ZZ23}
C.~Zhang and E.~Zuazua, ``A quantitative analysis of {K}oopman operator methods
  for system identification and predictions,'' \emph{Comptes Rendus.
  M\'ecanique}, 2023.

\bibitem{LLLK24}
L.~Llamazares-Elias, S.~Llamazares-Elias, J.~Latz, and S.~Klus, ``Data-driven
  approximation of {K}oopman operators and generators: Convergence rates and
  error bounds,'' 2024.

\bibitem{BaKo17}
R.~Banisch and P.~Koltai, ``Understanding the geometry of transport: Diffusion
  maps for {L}agrangian trajectory data unravel coherent sets,'' \emph{Chaos:
  An Interdisciplinary Journal of Nonlinear Science}, vol.~27, no.~3, p.
  035804, 2017.

\bibitem{KHMN19}
S.~Klus, B.~E. Husic, M.~Mollenhauer, and F.~No\'e, ``Kernel methods for
  detecting coherent structures in dynamical data,'' \emph{Chaos: An
  Interdisciplinary Journal of Nonlinear Science}, vol.~29, no.~12, 2019.

\bibitem{LNP13}
T.~Leli{\`e}vre, F.~Nier, and G.~A. Pavliotis, ``Optimal non-reversible linear
  drift for the convergence to equilibrium of a diffusion,'' \emph{Journal of
  Statistical Physics}, vol. 152, no.~2, pp. 237--274, 2013.

\bibitem{BKKBDS18}
A.~Bittracher, P.~Koltai, S.~Klus, R.~Banisch, M.~Dellnitz, and C.~Sch{\"u}tte,
  ``Transition manifolds of complex metastable systems: {T}heory and
  data-driven computation of effective dynamics,'' \emph{Journal of Nonlinear
  Science}, vol.~28, no.~2, pp. 471--512, 2018.

\bibitem{Luxburg07}
U.~von Luxburg, ``A tutorial on spectral clustering,'' \emph{Statistics and
  Computing}, vol.~17, no.~4, pp. 395--416, 2007.

\bibitem{TDK25}
M.~Trower, N.~D. Conrad, and S.~Klus, ``Clustering time-evolving networks using
  the spatiotemporal graph {L}aplacian,'' \emph{Chaos}, vol.~35, no.~1, p.
  013126, 2025.

\bibitem{Aubin00}
J.-P. Aubin, \emph{Applied functional analysis}.\hskip 1em plus 0.5em minus
  0.4em\relax New York: John Wiley \& Sons, 2000.

\bibitem{MSKS20}
M.~Mollenhauer, I.~Schuster, S.~Klus, and C.~Sch\"utte, ``Singular value
  decomposition of operators on reproducing kernel {H}ilbert spaces,'' in
  \emph{Advances in Dynamics, Optimization and Computation}.\hskip 1em plus
  0.5em minus 0.4em\relax Cham: Springer, 2020, pp. 109--131.

\end{thebibliography}

%\newpage

% \section{Biography Section}
% If you have an EPS/PDF photo (graphicx package needed), extra braces are
%  needed around the contents of the optional argument to biography to prevent
%  the LaTeX parser from getting confused when it sees the complicated
%  $\backslash${\tt{includegraphics}} command within an optional argument. (You can create
%  your own custom macro containing the $\backslash${\tt{includegraphics}} command to make things
%  simpler here.)

% \vspace{11pt}

% \bf{If you include a photo:}\vspace{-33pt}
% \begin{IEEEbiography}[{\includegraphics[width=1in,height=1.25in,clip,keepaspectratio]{fig1}}]{Michael Shell}
% Use $\backslash${\tt{begin\{IEEEbiography\}}} and then for the 1st argument use $\backslash${\tt{includegraphics}} to declare and link the author photo.
% Use the author name as the 3rd argument followed by the biography text.
% \end{IEEEbiography}

% \vspace{11pt}

% \bf{If you will not include a photo:}\vspace{-33pt}
% \begin{IEEEbiographynophoto}{John Doe}
% Use $\backslash${\tt{begin\{IEEEbiographynophoto\}}} and the author name as the argument followed by the biography text.
% \end{IEEEbiographynophoto}

\end{document}